\theoremstyle{plain}
\newtheorem{theorem}{Theorem}[section]
\newtheorem{lemma}[theorem]{Lemma}
\theoremstyle{definition}
\theoremstyle{remark}
\begin{document}

\twocolumn[
\icmltitle{Certified Adversarial Robustness via Anisotropic Randomized Smoothing}



\icmlsetsymbol{equal}{*}

\begin{icmlauthorlist}
\icmlauthor{Hanbin Hong}{yyy}
\icmlauthor{Yuan Hong}{yyy}

\end{icmlauthorlist}

\icmlaffiliation{yyy}{Department of Computer Science, Illinois Institute of Technology, Chicago, IL}

\icmlcorrespondingauthor{Hanbin Hong}{hhong4@hawk.iit.edu}
\icmlcorrespondingauthor{Yuan Hong}{yuan.hong@iit.edu}

\icmlkeywords{Machine Learning, ICML}

\vskip 0.3in
]




\begin{abstract}

Randomized smoothing has achieved great success for certified robustness against adversarial perturbations. Given any arbitrary classifier, randomized smoothing can guarantee the classifier's prediction over the perturbed input with provable robustness bound by injecting noise into the classifier. However, all of the existing methods rely on fixed i.i.d. probability distribution to generate noise for all dimensions of the data (e.g., all the pixels in an image), which ignores the heterogeneity of inputs and data dimensions. Thus, existing randomized smoothing methods cannot provide optimal protection for all the inputs. To address this limitation, we propose a novel anisotropic randomized smoothing method which ensures provable robustness guarantee based on pixel-wise noise distributions. Also, we design a novel CNN-based noise generator to efficiently fine-tune the pixel-wise noise distributions for all the pixels in each input. Experimental results demonstrate that our method significantly outperforms the state-of-the-art randomized smoothing methods.

\end{abstract}

\section{Introduction}
\label{sec: Introduction}

\begin{figure*}[!h]
    \centering
    \includegraphics[width=160mm]{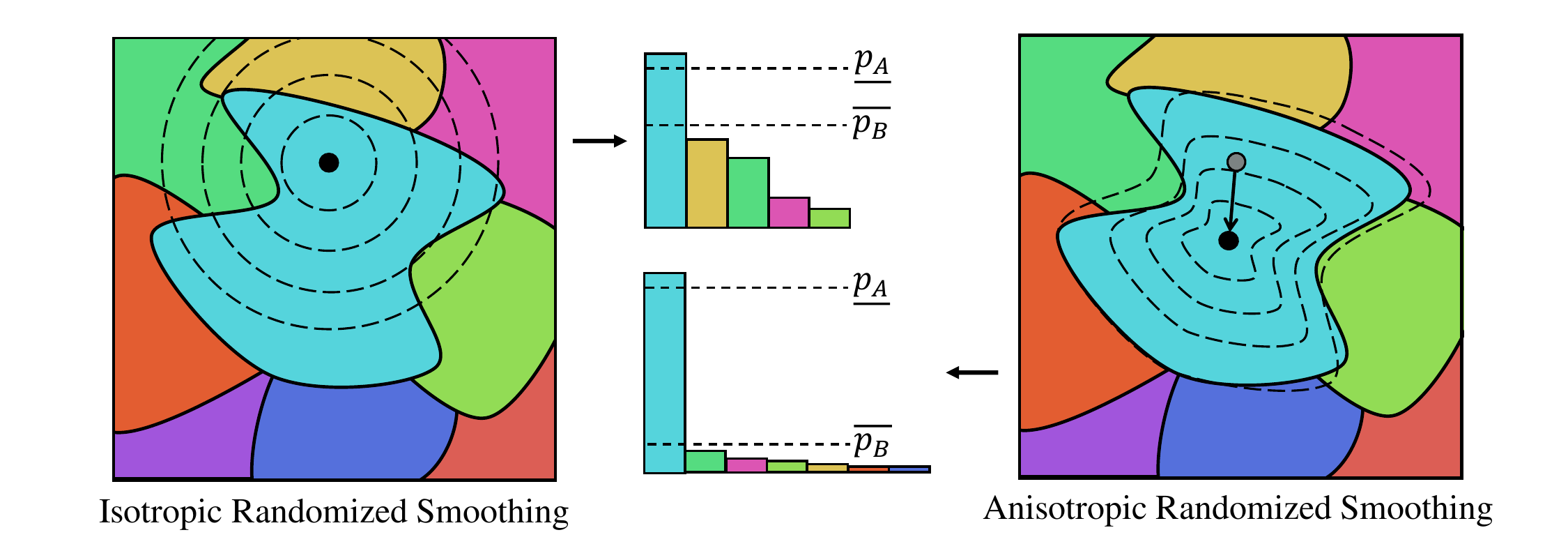}
    \caption{\textmd{Evaluation of the smoothed classifier at an input $x$. The decision regions of the base classifier $f$ are represented in different colors. The dashed lines are the level sets of the noise distribution adding to the input. The left figure illustrates the randomized smoothing with isotropic Gaussian noise $\mathcal{N}(0, \sigma^2 \mathbf{I})$ in \cite{cohen2019certified} whereas the right figure illustrates the randomized smoothing with anisotropic Gaussian noise $\mathcal{N}(\mu, \mathbf{\Sigma})$. The middle figure shows the prediction probabilities of the input over the noises. }}\vspace{-0.2in}
    \label{fig:overview}
\end{figure*}

Deep learning (DL) models have been proven to be vulnerable to well-crafted adversarial examples \cite{goodfellow2014explaining, carlini2017towards}. For example, adversaries can generate minor malicious perturbations either with or without access to the DL models (in white-box or blackbox settings). Once injected into the input of the DL models, it could trigger misclassification or misrecognition. These successful adversarial attacks are detrimental to DL models in real-world deployments and may cause severe consequences, e.g., car accidents in autonomous driving \cite{sun2020towards}, misdiagnosis in the auto-diagnosis \cite{ma2021understanding}, and misrecognizing faces \cite{dong2019efficient}. 

To protect the DL models against adversarial attacks, empirical defense methods have been proposed in the past decade. Through training more robust models by including adversarial examples in the training set \cite{madry2018towards,shafahi2019adversarial}, destroying the malicious perturbation \cite{xu2017feature,xie2018mitigating} or regularizing the features \cite{xie2019feature,yang2021adversarial}, these empirical methods have shown effective defenses against the adversarial attacks. However, given any new powerful defense method, stronger attacks \cite{athalye2018obfuscated,croce2020reliable,xie2019improving} will be designed to break the defenses. None of the empirical defenses can fully ensure the robustness of DL models all the time. Recently, certified robustness methods \cite{wong2018provable,cohen2019certified, lecuyer2019certified} were proposed to provide provable guarantees on the robustness of the DL models. They aim to certify whether potential adversarial perturbations can result in misclassification or not. Once certified, it guarantees that any perturbation cannot fool the classifier if it is within a boundary. Typically, this boundary is given by an $\ell_p$-norm ball, e.g., $\ell_1$, $\ell_2$, or $\ell_\infty$. 

The randomized smoothing (RS) methods \cite{lecuyer2019certified,teng2020ell,cohen2019certified} provide certified robustness on any arbitrary classifiers (compared to traditional certified methods on specific classifiers, e.g, ReLU based neural networks). By injecting the noises to the input, RS turns any arbitrary classifier into a smoothed classifier, then the robustness of the smoothed classifier can be guaranteed if the perturbation is within a theoretical bound in $\ell_p$-norm, i.e., $\emph{certified radius}$. For example, \cite{cohen2019certified} derives a tight $\ell_2$ certified radius for Gaussian noise. However, existing RS theories \cite{cohen2019certified,teng2020ell,yang2020randomized,zhang2020black} can only derive the certified radii for fixed i.i.d. noises, e.g., Gaussian noise \cite{cohen2019certified} or Laplace noise \cite{teng2020ell}, which applies identical distribution to different pixels and inputs. Thus, existing methods ignore the heterogeneity of the inputs and data dimensions, and cannot provide optimal protection for all the inputs. 


To pursue optimal protection for every input, we propose a novel randomized smoothing theory for anisotropic noise (to our best knowledge), which applies different distributions to generate noise for different data dimensions (e.g., image pixels). In this paper, we consider Gaussian noise as a use case to introduce anisotropic randomized smoothing (\emph{other noise can also achieve it with similar theories} as discussed in Section \ref{sec: Discussion}). We also propose a Noise Generator to generate the pixel-wise noise distributions for all the pixels in each input. Specifically, a tight certified radius is derived in our theory when all the pixels are smoothed by Gaussian noise with different means and variances. The Noise Generator uses a convolution neural network (CNN) to efficiently fine-tune the noise mean and variance for each pixel in randomized smoothing.


Compared to the traditional RS methods \cite{cohen2019certified,teng2020ell,zhang2020black}, our certified defense provides the following new significant benefits:

\begin{itemize}
    
\item \textbf{Higher Certified Accuracy}. We train the Noise Generator to generate the optimal means for the noises to be added to the input. The noise with proper means can move the input representation to the center of its class, e.g., some input located near the decision boundary can be adjusted to the class representation center by adding the noise mean towards the center (see Figure \ref{fig:overview} for illustration). This improves the certified accuracy for as high as $32.9\%$ on CIFAR10 and $20.6\%$ on ImageNet.  

\item \textbf{Larger Certified Radii}. We train the Noise Generator to also generate the optimal variance for the noises to be added to the input. Different from the isotropic Gaussian noise, we generate different variances for different pixels to keep the noisy sample within the decision boundary (see Figure \ref{fig:overview} for illustration). Thanks to the optimal means and variances, our smoothed classifier maintains a higher prediction accuracy over the same noise than the traditional smoothed classifier, which leads to larger certified radii. When certified accuracy is fixed at $20\%$, the certified radius can be improved from $1.10$ to $2.96$ on CIFAR10 and from $1.92$ to $3.73$ on ImageNet.

\item \textbf{Enhanced Robustness against Pre-Perturbing Attack}. We also study a new problem in randomized smoothing: what happens if the input is perturbed before certification with noise? Indeed, if the input is maliciously perturbed before injecting the noise for certification, the smoothed classifier's prediction could be guaranteed to be consistently wrong (certifying the class label of the perturbed input). We show that our method is more robust than the traditional RS methods against such adversarial attacks.

\end{itemize}




\section{Related Work}
\label{sec: RelatedWork}
In this paper, all the defense methods are proposed against the evasion attacks to machine learning models. They aim to make the model correctly predict results on perturbed inputs. Typically, there are two types of defense methods: empirical defenses and certified defenses. The empirical defenses empirically protect the models while the certified defenses ensure the robustness of the models with provable guarantees.

\textbf{Empirical Defenses}. In the past decade, empirical defenses have been proposed to protect the machine learning models in different ways, e.g., training more robust models by including adversarial examples in the training data \cite{madry2018towards,shafahi2019adversarial,tramer2018ensemble, wong2019fast}, pre-processing the inputs to destroy the malicious patterns in the perturbation \cite{liu2019feature,xu2017feature,xie2019feature, samangouei2018defense}, regularizing the features in the model to eliminate the effects of perturbations \cite{xie2018mitigating,yang2021adversarial} or detecting the adversarial examples before fed into the model \cite{lu2017safetynet, metzen2017detecting, lee2018simple}. Although empirical evidence has shown that these methods can efficiently defend against adversarial attacks, none of them can guarantee model robustness against adversarial attacks. 

\textbf{Certified Defenses}. The certified defenses were proposed to guarantee robustness against adversarial perturbations. In general, the robustness can be guaranteed if the perturbations are within a boundary, e.g., a $\ell_1$ , $\ell_2$
or $\ell_\infty$ ball of radius $R$. The existing certified defenses can be roughly divided into two categories: exact certified defenses and conservative certified defenses. The exact certified defenses usually leverage satisfiability modulo theories \cite{katz2017reluplex,carlini2017provably,ehlers2017formal,huang2017safety} or mixed-integer linear programming \cite{cheng2017maximum,lomuscio2017approach,fischetti2018deep,bunel2018unified} to guarantee whether there exists a perturbation within radius $R$ or not. The conservative certified defenses provide conservative guarantee on the robustness by global/local Lipschitz constant methods \cite{gouk2021regularisation,tsuzuku2018lipschitz,anil2019sorting,DBLP:conf/icml/CisseBGDU17,DBLP:conf/nips/HeinA17}, optimization methods \cite{wong2018provable,wong2018scaling,raghunathan2018certified,dvijotham2018dual} or layer-by-layer certifying \cite{mirman2018differentiable,singh2018fast, DBLP:journals/corr/abs-1810-12715,DBLP:conf/icml/WengZCSHDBD18, DBLP:conf/nips/ZhangWCHD18}. In certain circumstances, it cannot provide the guarantee even when the malicious perturbation exists. However, the exact certified defenses cannot be scaled to large-size networks, and the conservative certified defenses usually assume specific types of networks, e.g., ReLU based networks. None of these schemes can provide certified robustness to any arbitrary classifiers until the randomized smoothing was proposed.

\textbf{Randomized Smoothing}. The randomized smoothing was first studied by Lecuyer et al. \cite{lecuyer2019certified}, where a loose theoretical bound for the perturbation is derived using Differential Privacy methods \cite{dwork2006differential, dwork2008differential}. The first tight guarantee was proposed by Cohen et al. \cite{cohen2019certified}, in which, any arbitrary classifier can be turned into a smoothed classifier by adding Gaussian noise to the data. The smoothed classifier's prediction can be guaranteed to be consistent within a certified radius in $\ell_2$-norm, which is tightly derived. Following the track of randomized smoothing, a series of methods have been proposed to guarantee the robustness against different $\ell_p$ perturbations with different noise distributions, e.g., Teng et al. \cite{teng2020ell} derives the certified radius for $\ell_1$ perturbations with Laplace noise, and Lee et al. \cite{DBLP:conf/nips/LeeYCJ19} derives the certified radius against $\ell_0$ perturbations with uniform noise. Some methods propose unified theories to guarantee the robustness against a diverse set of $\ell_p$ perturbations with different noises. For example, Zhang et al. \cite{zhang2020black} propose a framework from the optimization perspective to certify the robustness against $\ell_1$, $\ell_2$ and $\ell_\infty$ perturbations with special noise distributions. Yang et al. \cite{yang2020randomized} propose two different methods, e.g., level set method and differential methods, that can derive the upper bound and the lower bound of the certified radius in different norms for a wide range of distributions. However, all the existing randomized smoothing methods add noise drawn from a fixed distribution, e.g., Gaussian or Laplace, to all the inputs and all dimensions of each input (e.g., all the pixels on an image). This ignores the heterogeneity of inputs and even the pixels. Thus, they cannot provide the optimal protection for every input and pixel.

Therefore, we establish the a novel randomized smoothing method based on anisotropic noise. We also realize another work on the anisotropic randomized smoothing \cite{eiras2021ancer} has been established. Our theoretical results are the same with theirs but we follows a different way to establish the theory of anisotropic randomized smoothing. 

\section{Isotropic Randomized Smoothing}
\label{sec: preliminary}
We first review the randomized smoothing with isotropic Gaussian noise \cite{cohen2019certified}.

We study the classification from $\mathbb{R}^d$ to classes $\mathcal{Y}$. Given an arbitrary base classifier $f$, randomized smoothing is a method that can turn the base classifier into a ``smoothed'' classifier $g$ by injecting noise into the input. The smoothed classifier predicts the top-1 class w.r.t. to the input $x$ over the noise. The randomized smoothing in Cohen et al. \cite{cohen2019certified} is formally defined as:
\vspace{-0.1in}

\begin{align}\small
\label{eq:cohen's g}
    g(x)=\arg \max_{c\in \mathcal{Y}} \mathbb{P}(f(x+\epsilon)=c), \epsilon \sim \mathcal{N}(0, \sigma^2 \mathbf{I})
\end{align}

\vspace{-0.05in}


The injected noise $\epsilon$ follows an independent isotropic Gaussian noise $\mathcal{N}(0,\sigma^2 \textbf{I})$. Also, the mean of the Gaussian noise is set to be $0$. Thus, the randomized smoothing proposed by \cite{cohen2019certified} adds noise to all the dimensions of the inputs with identical variance and zero-mean.

Based on the smoothed classifier $g$, the top-1 class is denoted as $c_A \in \mathcal{Y}$, the second probable class is denoted as $c_B\in \mathcal{Y}$, and the corresponding lower bound and upper bound of the class probabilities are denoted as $\underline{p_A}$ and $\overline{p_B}$. Cohen et al. \cite{cohen2019certified} derives the first tight bound of the certified radius with the isotropic Gaussian noise in Theorem \ref{thm:cohen's thm}.

\begin{theorem}[\textbf{Randomized Smoothing with Isotropic Gaussian Noise \cite{cohen2019certified}}]
\label{thm:cohen's thm}
Let $f \ : \ \mathbb{R}^d \to \mathcal{Y}$ be any deterministic or random function, and let $\epsilon \sim \mathcal{N}(0,\sigma^2 \mathbf{I})$. Define $g$ as in Eq. (\ref{eq:cohen's g}). For a specific $x\in \mathbb{R}^d$, there exist $c_A\in \mathcal{Y}$ and $\underline{p_A}$, $\overline{p_B} \in [0,1]$ such that:

\vspace{-0.1in}

\begin{equation}\small
\label{eq:cohen's condition}
    \mathbb{P}(f(x+\epsilon)=c_A) \geq \underline{p_A} \geq \overline{p_B} \geq \max_{c \neq c_A} \mathbb{P}(f(x+\epsilon)=c)
\end{equation}

\vspace{-0.05in}

Then $g(x+\delta)=c_A$ for all $||\delta||_2 < R$, where
\begin{equation}\small
\label{eq:cohen's radius}
    R=\frac{\sigma}{2}(\Phi^{-1}(\underline{p_A})-\Phi^{-1}(\overline{p_B}))
\end{equation}
where $\delta$ denotes the perturbation.


\begin{proof}
See detailed proof in \cite{cohen2019certified}. 
\end{proof}
\end{theorem}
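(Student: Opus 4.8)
The plan is to fix an arbitrary perturbation $\delta$ with $\|\delta\|_2 < R$ and show that the smoothed classifier still returns $c_A$ at the perturbed point, i.e. that $\mathbb{P}(f(x+\delta+\epsilon)=c_A) > \mathbb{P}(f(x+\delta+\epsilon)=c)$ for every $c \neq c_A$. It suffices to control the runner-up class, since by hypothesis $\overline{p_B}$ upper bounds $\mathbb{P}(f(x+\epsilon)=c)$ for all $c \neq c_A$. I would introduce the two Gaussians $X \sim \mathcal{N}(x,\sigma^2\mathbf{I})$ and $Y \sim \mathcal{N}(x+\delta,\sigma^2\mathbf{I})$, so that the hypotheses become constraints on the $X$-measure of the decision sets $A = \{z : f(z)=c_A\}$ and $B=\{z : f(z)=c_B\}$, namely $\mathbb{P}(X \in A) \geq \underline{p_A}$ and $\mathbb{P}(X \in B) \leq \overline{p_B}$, while the conclusion requires bounding their $Y$-measures.

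The key tool is the Neyman--Pearson lemma applied to the pair $(X,Y)$. I would use it to argue that, among all measurable sets with $X$-measure at least $\underline{p_A}$, the one minimizing the $Y$-measure is a likelihood-ratio sublevel set; for two isotropic Gaussians differing only in mean, the ratio $p_Y/p_X$ is monotone in $\delta^\top z$, so the worst-case set is the half-space $H_A = \{z : \delta^\top(z-x) \leq \beta\}$. Choosing $\beta$ so that $\mathbb{P}(X \in H_A)=\underline{p_A}$ gives, after a one-dimensional Gaussian computation (projecting onto the direction $\delta/\|\delta\|_2$), the value $\beta = \sigma\|\delta\|_2\,\Phi^{-1}(\underline{p_A})$; evaluating the same half-space under $Y$, whose projected mean is shifted by $\|\delta\|_2$, then yields the lower bound $\mathbb{P}(f(Y)=c_A) \geq \Phi\!\left(\Phi^{-1}(\underline{p_A}) - \|\delta\|_2/\sigma\right)$. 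A symmetric argument — now \emph{maximizing} the $Y$-measure subject to the $X$-measure being at most $\overline{p_B}$ — gives $\mathbb{P}(f(Y)=c_B) \leq \Phi\!\left(\Phi^{-1}(\overline{p_B}) + \|\delta\|_2/\sigma\right)$.

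Finally, I would combine the two bounds: a sufficient condition for $c_A$ to remain the top class at $x+\delta$ is $\Phi\!\left(\Phi^{-1}(\underline{p_A}) - \|\delta\|_2/\sigma\right) > \Phi\!\left(\Phi^{-1}(\overline{p_B}) + \|\delta\|_2/\sigma\right)$. Since $\Phi$ is strictly increasing I can drop it and rearrange to $\|\delta\|_2 < \frac{\sigma}{2}\left(\Phi^{-1}(\underline{p_A}) - \Phi^{-1}(\overline{p_B})\right) = R$, which is exactly the assumed bound, completing the argument.

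I expect the main obstacle to be the Neyman--Pearson step: one must verify rigorously that the likelihood-ratio test for two shifted isotropic Gaussians is precisely a half-space orthogonal to $\delta$, and that the optimality of this half-space transfers to the constrained optimization over the (arbitrary, possibly non-convex) decision regions of $f$. The accompanying integrals — reducing the $d$-dimensional half-space probabilities to one-dimensional $\Phi$ evaluations along the $\delta$ direction — are routine once the geometry is fixed, and the monotonicity of $\Phi$ makes the concluding inequality immediate.
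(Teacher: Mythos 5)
Your proposal is correct and follows essentially the same route as the proof the paper defers to (Cohen et al.'s Neyman--Pearson argument, which the paper itself reproduces in anisotropic form in Appendix~\ref{apd: proof thm 4.1}): likelihood-ratio half-spaces orthogonal to $\delta$, one-dimensional Gaussian projections yielding the bounds $\Phi\left(\Phi^{-1}(\underline{p_A})-\|\delta\|_2/\sigma\right)$ and $\Phi\left(\Phi^{-1}(\overline{p_B})+\|\delta\|_2/\sigma\right)$, and monotonicity of $\Phi$ to obtain $R=\frac{\sigma}{2}\left(\Phi^{-1}(\underline{p_A})-\Phi^{-1}(\overline{p_B})\right)$. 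I see no gaps; the Neyman--Pearson step you flag as the main obstacle is handled exactly as you describe, by verifying that sublevel sets of the likelihood ratio of two shifted isotropic Gaussians are half-spaces in the direction of $\delta$.
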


Theorem \ref{thm:cohen's thm} guarantees that the smoothed classifier will consistently predict the most probable class when the perturbation is within the radius defined in Eq. (\ref{eq:cohen's radius}) if $p_A$ and $p_B$ satisfy the condition (\ref{eq:cohen's condition}) in Theorem \ref{thm:cohen's thm}.

While certifying any arbitrary classifier, isotropic randomized smoothing applies an identical distribution to generate the noise for all the dimensions, which may limit the defense performance on all the pixels of different inputs. Thus, it is desirable to extend the randomized smoothing to add heterogeneous noise for different pixels (anisotropic). 

However, there are two challenges on extending the isotropic RS to anisotropic RS. First, anisotropic RS needs more complicated theories on deriving the certified radius since the noise follows different distributions on different dimensions. Second, instead of simply adding noise with the same distribution to all the pixels, we need to fine-tune the noise distribution for each pixel in the anisotropic RS. To address these challenges, we propose a novel theory on anisotropic RS in Section \ref{sec: ARS} and design a novel mechanism for finding the optimal noise distribution for all the pixels in Section \ref{sec: NoiseGenerator}.

\section{Anisotropic Randomized Smoothing}
\label{sec: ARS}

In this section, we propose the first anisotropic randomized smoothing (ARS) theory with the tight certified radius. We take the Gaussian noise as an example to illustrate how to extend the isotropic randomized smoothing to anisotropic randomized smoothing, while other noise can be also readily extended for ARS using similar procedures. Specifically, we also theoretically derive the certified radius for anisotropic Laplace noise in Section \ref{sec: Discussion} as an extension.

For the Gaussian noise, we first extend the smoothed classifier in Eq. (\ref{eq:our g}), and then provide a tight guarantee on its robustness with the anisotropic noise in Theorem \ref{thm:our thm}.

\begin{equation}\small
\label{eq:our g}
    g'(x)=\arg \max_{c\in \mathcal{Y}} \mathbb{P}(f(x+\epsilon)=c), \ \epsilon \sim \mathcal{N}(\mathbf{\mu},\mathbf{\Sigma})
\end{equation}
where the mean of the Gaussian noise is defined as $\mathbf{\mu}=[\mu_1, \mu_2, ..., \mu_d]$, and the variance of the Gaussian noise is defined as $\mathbf{\Sigma}=diag(\sigma_1^2, \sigma_2^2, ..., \sigma_d^2)$.

\begin{restatable}[\textbf{Randomized Smoothing with Anisotropic Gaussian Noise}]{theorem}{thmone}
\label{thm:our thm}
Let $f \ : \ \mathbb{R}^d \to \mathcal{Y}$ be any deterministic or random function, and let $\epsilon \sim \mathcal{N}(\mathbf{\mu},\mathbf{\Sigma})$. Let $g'$ be defined as in Eq. (\ref{eq:our g}). Suppose that for a specific $x\in \mathbb{R}^d$, there exist $c_A\in \mathcal{Y}$ and $\underline{p_A}$, $\overline{p_B} \in [0,1]$ such that:

\begin{equation}\small
\label{eq:condition}
    \mathbb{P}(f(x+\epsilon)=c_A) \geq \underline{p_A} \geq \overline{p_B} \geq \max_{c \neq c_A} \mathbb{P}(f(x+\epsilon)=c)
\end{equation}

Then $g'(x+\delta)=c_A$ for all $||\delta||_2 < R$, where
\begin{equation}\small
\label{eq:our radius}
    R=\frac{1}{2}\min \{\sigma_i\} (\Phi^{-1}(\underline{p_A})-\Phi^{-1}(\overline{p_B})) 
\end{equation}
where $\sigma_i$ denotes the variance on $i$-th dimension of the input, $\delta$ denotes the perturbation.
\end{restatable}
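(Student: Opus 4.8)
The plan is to reduce the anisotropic case to the isotropic case that Cohen et al.\ already proved (Theorem~\ref{thm:cohen's thm}) by a linear change of variables that whitens the Gaussian noise. First I would handle the nonzero mean $\mathbf{\mu}$. Since $\epsilon \sim \mathcal{N}(\mathbf{\mu}, \mathbf{\Sigma})$, write $\epsilon = \mathbf{\mu} + \eta$ with $\eta \sim \mathcal{N}(0, \mathbf{\Sigma})$, and absorb the mean into a shifted base classifier by defining $\tilde f(y) = f(y + \mathbf{\mu})$. Then $f(x+\epsilon) = \tilde f(x + \eta)$, so the smoothed classifier $g'$ based on $f$ with noise $\mathcal{N}(\mathbf{\mu},\mathbf{\Sigma})$ coincides with a smoothed classifier based on $\tilde f$ with zero-mean noise $\mathcal{N}(0,\mathbf{\Sigma})$, and the class-probability condition (\ref{eq:condition}) carries over unchanged. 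This shows the mean contributes nothing to the radius, consistent with Eq.~(\ref{eq:our radius}), and lets me work with zero-mean noise from here on.

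Next I would whiten the diagonal covariance. Define the linear map $T = \mathbf{\Sigma}^{-1/2} = \operatorname{diag}(1/\sigma_1, \dots, 1/\sigma_d)$ and the composed classifier $h(z) = \tilde f(T^{-1} z) = \tilde f(\mathbf{\Sigma}^{1/2} z)$. Under the substitution $z = T y$, the anisotropic noise $\eta \sim \mathcal{N}(0,\mathbf{\Sigma})$ maps to $T\eta \sim \mathcal{N}(0,\mathbf{I})$, an isotropic standard Gaussian. Thus the smoothed classifier for $\tilde f$ at $x$ with noise $\mathcal{N}(0,\mathbf{\Sigma})$ equals the smoothed classifier for $h$ at $z_0 = Tx$ with noise $\mathcal{N}(0,\mathbf{I})$, and condition (\ref{eq:condition}) transfers to $h$ at $z_0$ with the same $\underline{p_A}, \overline{p_B}$. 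Applying Theorem~\ref{thm:cohen's thm} with $\sigma = 1$ to $h$ gives that the smoothed version of $h$ predicts $c_A$ at every point $z_0 + \gamma$ with $\|\gamma\|_2 < \tfrac12(\Phi^{-1}(\underline{p_A}) - \Phi^{-1}(\overline{p_B}))$.

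Finally I would pull this certified region back through $T$ to the original input space. A perturbation $\delta$ applied to $x$ corresponds to the perturbation $\gamma = T\delta$ in whitened coordinates, so the guarantee $g'(x+\delta) = c_A$ holds whenever $\|T\delta\|_2 < \tfrac12(\Phi^{-1}(\underline{p_A}) - \Phi^{-1}(\overline{p_B}))$. The last step is to convert this ellipsoidal condition on $\delta$ into an $\ell_2$-ball. Since $\|T\delta\|_2^2 = \sum_i \delta_i^2/\sigma_i^2 \le \tfrac{1}{\min_i \sigma_i^2}\sum_i \delta_i^2 = \|\delta\|_2^2/\min_i\{\sigma_i^2\}$, requiring $\|\delta\|_2 < \min_i\{\sigma_i\}\cdot\tfrac12(\Phi^{-1}(\underline{p_A})-\Phi^{-1}(\overline{p_B}))$ is sufficient to force $\|T\delta\|_2$ below the whitened threshold, which yields exactly Eq.~(\ref{eq:our radius}).

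The main obstacle is the very last inequality: the genuine certified region in the original coordinates is the ellipsoid $\{\delta : \|T\delta\|_2 < \text{threshold}\}$, and replacing it by the inscribed $\ell_2$-ball of radius $\min_i\{\sigma_i\}\cdot\text{threshold}$ is a lossy relaxation that is tight only along the axis with the smallest variance. I would want to confirm that this is the largest $\ell_2$-ball contained in the ellipsoid (so the stated $R$ is the best $\ell_2$ radius achievable from this argument) and flag that the ``tightness'' claimed for the anisotropic result is tightness of the underlying ellipsoidal certificate rather than of its $\ell_2$ inscription; a cleaner statement might certify the full anisotropic ellipsoid $\sum_i \delta_i^2/\sigma_i^2 < \text{threshold}^2$ directly.
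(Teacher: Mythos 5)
Your proof is correct, and it takes a genuinely different route from the paper's. The paper never invokes Theorem \ref{thm:cohen's thm} as a black box; instead it re-runs the Neyman--Pearson argument from scratch in anisotropic coordinates: it proves a dedicated lemma (Lemma \ref{lemma: anisotropic NP lemma}) showing that the likelihood-ratio sets of $\mathcal{N}(x+\mu,\mathbf{\Sigma})$ versus $\mathcal{N}(x+\mu+\delta,\mathbf{\Sigma})$ are half-spaces of the form $\{z:\sum_{i=1}^d \tfrac{\delta_i}{\sigma_i^2}z_i \le \beta\}$, then calibrates half-spaces $A,B$ so that $\mathbb{P}(X\in A)=\underline{p_A}$ and $\mathbb{P}(X\in B)=\overline{p_B}$, and computes $\mathbb{P}(Y\in A)=\Phi\bigl(\Phi^{-1}(\underline{p_A})-\sqrt{\sum_i \delta_i^2/\sigma_i^2}\bigr)$ and $\mathbb{P}(Y\in B)=\Phi\bigl(\Phi^{-1}(\overline{p_B})+\sqrt{\sum_i \delta_i^2/\sigma_i^2}\bigr)$. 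That yields exactly the ellipsoidal condition $\sqrt{\sum_i \delta_i^2/\sigma_i^2}\le\tfrac{1}{2}(\Phi^{-1}(\underline{p_A})-\Phi^{-1}(\overline{p_B}))$ of Eq.~(\ref{eq:29}), which is what your whitening map $T=\mathbf{\Sigma}^{-1/2}$ delivers in one step; from there both arguments perform the identical lossy relaxation $\sqrt{\sum_i\delta_i^2/\sigma_i^2}\le \|\delta\|_2/\min_i\{\sigma_i\}$. Your reduction buys brevity and modularity: it needs no new Neyman--Pearson lemma, it makes the irrelevance of $\mathbf{\mu}$ transparent (a mere shift of the base classifier), and it extends verbatim to non-diagonal covariances, certifying $\|\mathbf{\Sigma}^{-1/2}\delta\|_2 < K$. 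What the paper's self-contained derivation buys is a reusable template that does not lean on any pre-existing isotropic theorem: the same half-space/likelihood-ratio scheme is repeated for anisotropic Laplace noise in Theorem \ref{thm:our Laplace}, whereas your route would handle that case only by citing the isotropic $\ell_1$ result of \cite{teng2020ell} as another black box. Finally, your closing caveat is exactly the paper's own reading of the result: Figure \ref{fig:deriving} depicts the true certified region as an ellipse with semi-axes $\sigma_i K$, and the stated $R$ is its semi-minor axis, i.e., the radius of the largest inscribed $\ell_2$ ball, so the ``tightness'' of the theorem is that of the underlying ellipsoidal certificate rather than of its $\ell_2$ inscription.
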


\begin{proof}
See detailed proof in Appendix \ref{apd: proof thm 4.1}. 
\end{proof}

Indeed, Theorem \ref{thm:our thm} can be considered as a generalized form of Theorem \ref{thm:cohen's thm} since when $\mathbf{\Sigma}=diag(\sigma^2,\sigma^2,...,\sigma^2)$ and $\mathbf{\mu}=\mathbf{0}$, we have $\min\{\sigma_i\}=\sigma$. Thus, our theorem returns the same certified radius as Theorem \ref{thm:cohen's thm} in the same setting.

In Theorem \ref{thm:our thm}, we observe that the certified radius only depends on the minimum variance over all the dimensions. Thus, any larger variance in other dimensions would not affect the certified radius. We will show that this provides benefits on defending against the attacks to the randomized smoothing in Section \ref{sec: Experiments}. When we certify the perturbed inputs, the large noise will also smooth the perturbation so that the adversarial effects can be reduced.

We also observe that the certified radius does not depend on the mean of the Gaussian noise. However, a proper mean of the noise may affect the smoothed classifier's prediction on the clean input, and further improve the certified radius since it affects the $p_A$ and $p_B$. We will show how to design a mechanism to find a proper mean (besides the variances) for the noise to improve the certified robustness.

We also present the binary case of Theorem \ref{thm:our thm} as below:

\begin{restatable}[\textbf{Binary Case}]{theorem}{thmtwo}
\label{thm:our thm binary}
Let $f \ : \ \mathbb{R}^d \to \mathcal{Y}$ be any deterministic or random function, and let $\epsilon \sim \mathcal{N}(\mathbf{\mu},\mathbf{\Sigma})$. Let $g'$ be defined as in Eq. (\ref{eq:our g}). Suppose that for a specific $x\in \mathbb{R}^d$, there exist $c_A\in \mathcal{Y}$ and $\underline{p_A}$ such that:
\begin{equation}
    \mathbb{P}(f(x+\epsilon)=c_A) \geq \underline{p_A} \geq \frac{1}{2}
\end{equation}
Then $g'(x+\delta)=c_A$ for all $||\delta||_2 < R$, where
\begin{equation}
\label{eq:binary's radius}
    R=\min \{\sigma_i\} \Phi^{-1}(\underline{p_A})
\end{equation}
\end{restatable}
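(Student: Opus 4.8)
The plan is to obtain Theorem~\ref{thm:our thm binary} as a direct specialization of Theorem~\ref{thm:our thm} to a two-class label set, so that no new Gaussian-geometry argument is required. First I would observe that in the binary setting the only competitor to $c_A$ is the single remaining class $c_B$, and since the class probabilities under the noise sum to one, $\mathbb{P}(f(x+\epsilon)=c_B) = 1 - \mathbb{P}(f(x+\epsilon)=c_A) \leq 1 - \underline{p_A}$. Hence I can legitimately take $\overline{p_B} = 1 - \underline{p_A}$ as a valid upper bound on the runner-up probability.

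Next I would check that the hypothesis $\underline{p_A} \geq \tfrac{1}{2}$ supplies exactly the ordering demanded by Theorem~\ref{thm:our thm}. Indeed $\underline{p_A} \geq \tfrac{1}{2}$ is equivalent to $\underline{p_A} \geq 1 - \underline{p_A} = \overline{p_B}$, so the full chain $\mathbb{P}(f(x+\epsilon)=c_A) \geq \underline{p_A} \geq \overline{p_B} \geq \mathbb{P}(f(x+\epsilon)=c_B)$ holds, and Theorem~\ref{thm:our thm} applies verbatim with this choice of $\overline{p_B}$.

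Finally I would substitute $\overline{p_B} = 1 - \underline{p_A}$ into the radius formula of Theorem~\ref{thm:our thm} and simplify using the symmetry of the standard normal quantile function, $\Phi^{-1}(1-p) = -\Phi^{-1}(p)$. This yields $R = \tfrac{1}{2}\min\{\sigma_i\}\bigl(\Phi^{-1}(\underline{p_A}) - \Phi^{-1}(1-\underline{p_A})\bigr) = \tfrac{1}{2}\min\{\sigma_i\}\cdot 2\,\Phi^{-1}(\underline{p_A}) = \min\{\sigma_i\}\,\Phi^{-1}(\underline{p_A})$, which is precisely the claimed bound in Eq.~(\ref{eq:binary's radius}).

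There is essentially no hard step: the whole argument is a one-line reduction plus the CDF symmetry identity. The only points that merit explicit verification are that $\overline{p_B}$ may indeed be set to $1-\underline{p_A}$ (which uses the two-class assumption together with total probability) and that $\Phi^{-1}(\underline{p_A}) \geq 0$ so that $R$ is non-negative; both follow immediately from $\underline{p_A} \geq \tfrac{1}{2}$.
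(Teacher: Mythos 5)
Your proof is correct, but it takes a genuinely different route from the paper's. You reduce the binary case to Theorem~\ref{thm:our thm} by the choice $\overline{p_B}=1-\underline{p_A}$ (legitimate, since the class probabilities sum to one and $\underline{p_A}\ge\tfrac12$ supplies the ordering $\underline{p_A}\ge\overline{p_B}$), and then collapse the radius formula with the quantile symmetry $\Phi^{-1}(1-p)=-\Phi^{-1}(p)$. The paper instead re-runs the Neyman--Pearson argument of Appendix~\ref{apd: proof thm 4.1} in a one-sided form: it keeps only the half-space $A$ from Eq.~(\ref{eq:21}), computes $\mathbb{P}(Y\in A)=\Phi\bigl(\Phi^{-1}(\underline{p_A})-\sqrt{\textstyle\sum_{i=1}^d \delta_i^2/\sigma_i^2}\,\bigr)$, and requires this to exceed $\tfrac12$, thereby certifying $\mathbb{P}(f(Y)=c_A)>\tfrac12$ directly without ever introducing a runner-up class or $\overline{p_B}$. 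The two arguments agree on the final radius precisely because of the Gaussian symmetry you invoke. Your reduction buys economy and transparency: no new geometry is needed, and it exposes the binary theorem as the $\overline{p_B}=1-\underline{p_A}$ specialization of the general one; in fact your appeal to the two-class assumption is unnecessary, since in any multiclass setting $\max_{c\ne c_A}\mathbb{P}(f(x+\epsilon)=c)\le 1-\mathbb{P}(f(x+\epsilon)=c_A)\le 1-\underline{p_A}$, so the same substitution is valid verbatim. The paper's one-sided derivation buys self-containedness and an intermediate statement--a lower bound on $\mathbb{P}(f(Y)=c_A)$ itself exceeding $\tfrac12$--which is the quantity the practical certification procedure (Algorithm~\ref{alg:predict}) actually estimates, explaining why the paper presents it as a standalone argument rather than a corollary.
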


\begin{proof}
See detailed proof in Appendix \ref{apd: proof thm 4.2}. 
\end{proof}

\section{Noise Generator}
\label{sec: NoiseGenerator}
\begin{figure*}[!h]
    \centering
    \includegraphics[width=160mm]{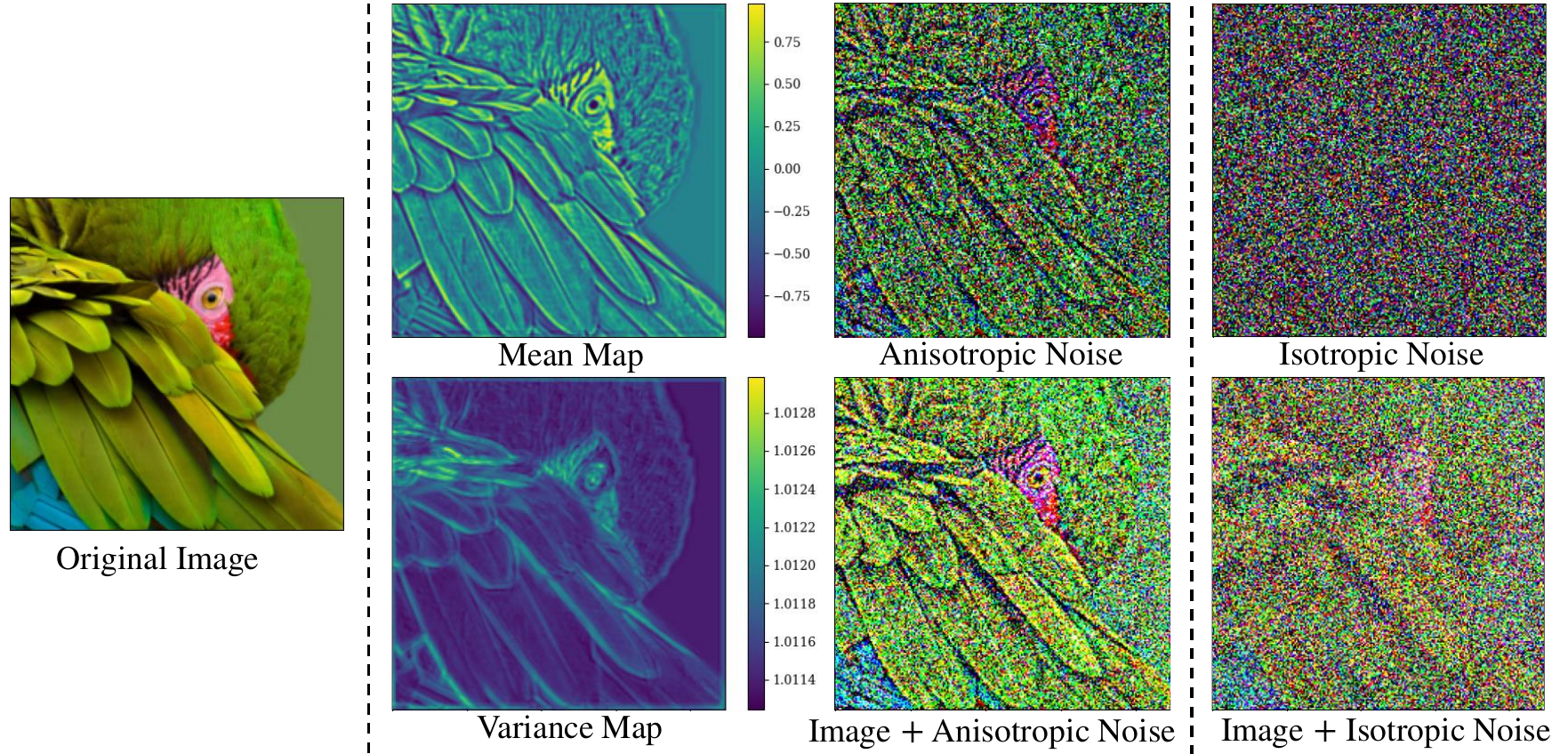}
    \caption{\textmd{An example of anisotropic and isotropic noise. \textbf{Left}: The original image. \textbf{Middle}: The pixel-wise means and variances for anisotropic Gaussian distribution generated by our Noise Generator, and the noise sample. \textbf{Right}: The noise sample generated with isotropic Gaussian distribution of $\sigma=1.0$.}}
    \label{fig:visualization}
\end{figure*}

Our proposed ARS theory could certify the defenses for randomized smoothing based on applying different means and different variances to generate noise for different pixels. However, how to design a new mechanism to fine-tune variance and mean for the noise distributions is a challenging problem. Note that the optimal variances and means can be different from input to input and from pixel to pixel. Therefore, we leverage the CNNs to design a Noise Generator for learning the mapping from the input to the optimal variances and means, and generating the input-dependent variances and means for the certification (see Figure \ref{fig:visualization} for an example of comparing the anisotropic and isotropic noises).

Specifically, in the training and the certification, the Noise Generator takes the image as input and returns a variance map as well as a mean map for the randomized smoothing. Then, the base classifier will take the noisy images as the input for training or classification. The framework is summarized in Figure \ref{fig:framework}. 

\begin{figure}[!h]
    \centering
    \includegraphics[width=80mm]{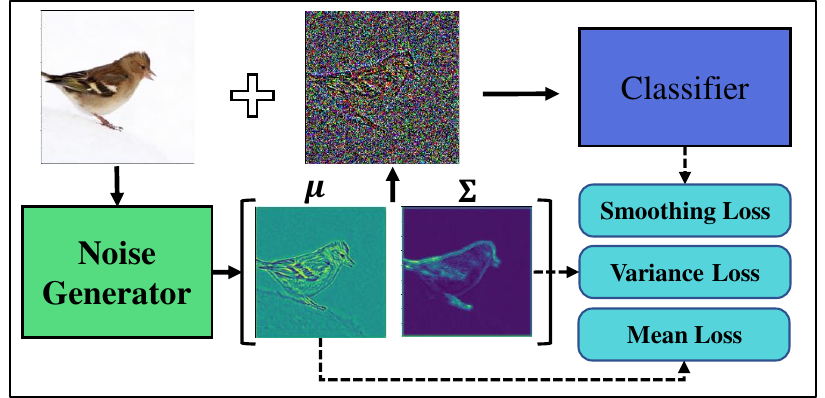}
    \caption{\textmd{\textbf{Framework}. The noise generated by Noise Generator will be added to the image for smoothed classifier training and classification. We train the Noise Generator and the classifier simultaneously with three losses.}}
    \label{fig:framework}
\end{figure}

\textbf{Architecture}.  The Noise Generator learns the mapping from the image to the variance and mean maps, which is similar to the function of the neural networks in image transformation. Therefore, inspired by the image super-resolution work \cite{zhang2018residual}, we also use the ``dense blocks'' \cite{huang2017densely} as the main architecture. It consists of 4 convolution
layers followed by leaky-ReLU \cite{xu2015empirical}. In addition, some special designs are integrated into the Noise Generator to fit our tasks (See Figure \ref{fig:noisegenerator}). First, the output of the dense block is separated into two branches to generate the mean map and variance map. Second, a sigmoid layer is inserted before the mean and variance maps to constrain the mean and variance values. Otherwise, the training may not converge due to some extreme values in the noise. Note that, our Noise Generator is a small network (5-layer deep), so it can be plugged before any classifier for customizing the noise distribution without consuming too much computing resources (See Section \ref{sec: Discussion} for detailed discussion on running time). 

\begin{figure}[!h]
    \centering
    \includegraphics[width=80mm]{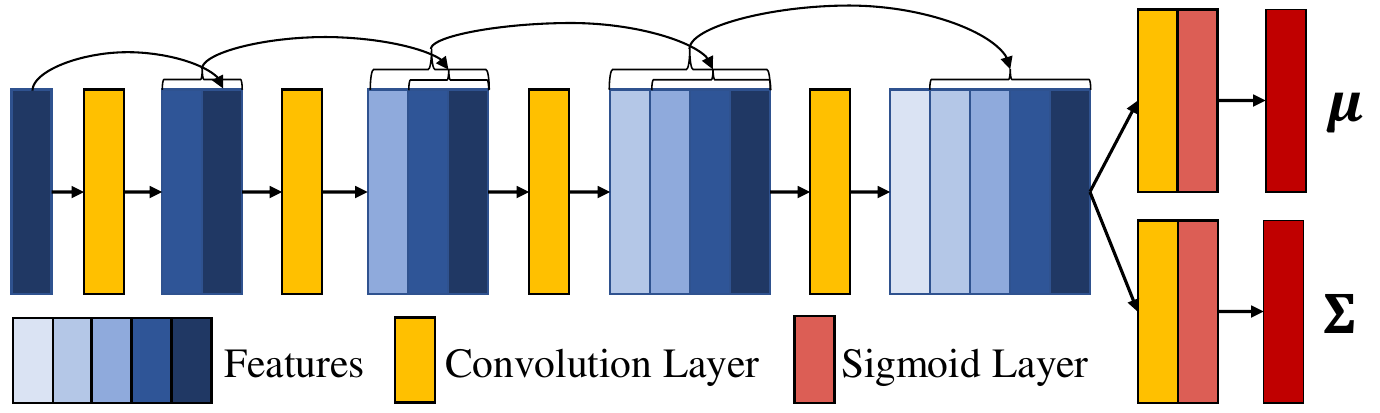}
    \caption{\textmd{\textbf{Architecture of Noise Generator}. }}
    \label{fig:noisegenerator}
\end{figure}

\textbf{Loss Functions}. We train the Noise Generator and the base classifier simultaneously. In the training, our goal is to generate the optimal mean and variance maps. Specifically, by fine-tuning the mean and variance, we aim to train the classifier to predict the noisy image as accurately as possible (large $p_A$). Thus, we use the smoothing loss (Eq. \ref{eq:smoothing loss}) for training both the Noise Generator and the classifier.

\begin{equation}
\label{eq:smoothing loss}
    \mathcal{L}_{s}=-\sum_{k=1}^N y_k \  log[\hat{y}_k(x+\epsilon, \theta_f, \theta_g)]
\end{equation}

where $y_k=1$ if the class $k$ is the correct label of input $x$, otherwise $y_k=0$. $\hat{y}_k$ denotes the prediction of the base classifier $f$ on the input $x$ perturbed by noise $\epsilon$. $\theta_f$ and $\theta_g$ denote the model parameters of classifier $f$ and Noise Generator, respectively.

Also, since the certified radius only depends on the minimum variance, we only have constraints on the minimum values in the variance map. Large variances can improve the certified radius while they will degrade the prediction accuracy since large noises may greatly distort the image. It has been widely known that the variance tunes the trade-off between the accuracy and the certified radius \cite{cohen2019certified, yang2020randomized}. In our case, it is the minimum variance that tunes the trade-off. Therefore, similar to recent works \cite{cohen2019certified}, we constrain the minimum variance to a certain level by the variance loss (Eq. \ref{eq:variance loss}). 

\begin{equation}
\label{eq:variance loss}
    \mathcal{L}_{v}=\left|\frac{\min \{\sigma_i(x, \theta_g)\}-\sigma_0}{\sigma_0}\right|
\end{equation}

where the $\sigma_i(x)$ is the variance for dimension $i$ of the input $x$. $\sigma_0$ denotes the variance target that the minimum variance is trained to achieve. By minimizing the variance loss, we aim to train the Noise Generator to generate a variance map with the minimum value $\min\{\sigma\}_i=\sigma_0$. 

We also constrain the mean map generation with the mean loss (Eq. (\ref{eq:mean loss})) by considering this: although the mean of the noise will not affect the certified radius and can help align the input to the representation center of its class, an extremely large value of mean can distort the image which would harm the prediction accuracy. Thus, the mean map should be as small as possible.

\begin{equation}
\label{eq:mean loss}
    \mathcal{L}_{m}=||\mu(x,\theta_g)||_2
\end{equation}

The training process is to minimize the total loss in Eq. (\ref{eq:total loss})

\begin{equation}
\label{eq:total loss}
    \min_{\theta_f,\theta_g} \quad \mathbb{E}_{x \sim \mathbb{D},\epsilon \sim \mathcal{N}(\mu,\mathbf{\Sigma})} [\alpha \mathcal{L}_s+\beta\mathcal{L}_v + \gamma\mathcal{L}_m]
\end{equation}

where $\alpha$, $\beta$, and $\gamma$ are the weights of the three loss functions, and $\mathbb{D}$ denotes the dataset.

\textbf{Practical Algorithms}. We follow Cohen et al. \cite{cohen2019certified} to use the Monte Carlo algorithm for evaluating $g(x)$ and compute the certified robustness. Different from Cohen et al. \cite{cohen2019certified}, our noise distributions are generated by Noise Generator for each input. Our algorithms for certification and prediction in binary case are presented in Algorithm \ref{alg:certify} and \ref{alg:predict} in Appendix \ref{sec:algorithm}, respectively.

Specifically, in the certification (Algorithm \ref{alg:certify}), the mean and variance for the noise are generated by Noise Generator. Then, we select the top-$1$ class $\hat{c_A}$ by the $ClassifySamples$ function, in which the base classifier outputs the predictions on the noisy input sampled from the noise distribution. Once the top-$1$ class is determined, classification will be run on more samples and the $LowerConfBound$ function will output the lower bound of the probability $\underline{p_A}$ computed by the Binomial test. If $\underline{p_A} > \frac{1}{2}$, we output the prediction class and the certified radius. Otherwise, it outputs ABSTAIN. In the prediction (Algorithm \ref{alg:predict}), we also generate the noise distribution and then compute the prediction counts over the noisy inputs. If the Binomial test succeeds, then it outputs the prediction class. Otherwise, it returns ABSTAIN.

\section{Experiments}
\label{sec: Experiments}
We evaluate the performance of certified robustness in this section. In addition, we evaluate the enhanced robustness of the randomized smoothing with anisotropic noises.

\textbf{Metrics}. Following \cite{cohen2019certified}, we use the \emph{approximate certified test set accuracy} to measure the certified robustness, which is defined as the fraction of the test set that is certified to be consistently correct within the certified radius $R$. See Eq. (\ref{eq:acc}) for the formal definition.
\vspace{-0.05 in}
\begin{equation}\small
\vspace{-0.05 in}
\label{eq:acc}
    Acc(R)=\frac{\sum_{j=1}^N \mathbf{1}_{[g'(x^j+\delta)=y^j]}}{N} \quad \text{for all} \  ||\delta||_2 \le R
\end{equation}

where $x^j$ and $y^j$ denote the $j$-th sample and its label in the test set. $N$ denotes the number of images in the test set.

\textbf{Experimental Settings}. We evaluate our method on the CIFAR10 \cite{krizhevsky2009learning} and ImageNet datasets \cite{ILSVRC15}. We use the original size of the images in CIFAR10, i.e., $3\times 32 \times 32$, while for ImageNet, we resize the images to $3\times 224 \times 224$. In the training, we train the base classifier and the Noise Generator with all the training set in CIFAR10 and ImageNet. We use the ResNet110 and ResNet50 \cite{he2016deep} as the base classifier for CIFAR10 and ImageNet, respectively. The training loss is computed over $5$ samples from the noise distribution for each image. For the certification, following \cite{cohen2019certified}, we evaluate the certified accuracy on the entire test set in CIFAR10 while randomly sample $500$ samples in the test set of ImageNet. In the certification, we also follow \cite{cohen2019certified} to set $\alpha=0.001$ and numbers of Monte Carlo samples $n_0=100$ and $n=100,000$. 

\textbf{Experimental Environment}. All the experiments were performed on the NSF Chameleon Cluster \cite{keahey2020lessons} with Intel(R) Xeon(R) Gold 6230 CPU @ 2.10GHz, 128G RAM, and Tesla V100 SXM2 32GB.

\subsection{Certified Accuracy}
\label{exp:certified accuracy}
We evaluate the certified accuracy on both CIFAR10 and ImageNet, and compare our ARS with the isotropic RS baseline \cite{cohen2019certified}. In \cite{cohen2019certified}, we present the certified accuracy computed with Gaussian noise. Following the setting in \cite{cohen2019certified}, the noise variance is set as $\sigma=0.12$, $0.25$, $0.5$, and $1.0$ in CIFAR10 and $\sigma=0.25$, $0.5$, $1.0$ in ImageNet. In our method, we also set the variance target $\sigma_0$ to be consistent with \cite{cohen2019certified}.

We show the certified accuracy for different certified radii on CIFAR10 and ImageNet datasets in Fig. \ref{fig:cifar10} and Fig. \ref{fig:imagenet}, respectively. We can observe that our certified accuracy is higher than the baseline's in the case of all the certified radii. In particular, when the variance is large, we observe a significant improvement in the certified accuracy using our method. This might be because our mean map in anisotropic Gaussian noise can bring the data representation to the center of the correct class and further improve the prediction probability on the noisy inputs (tuned by noise variance). 

\begin{figure}[t]
    \centering
    \vspace{-0.1in}
    \includegraphics[width=85mm]{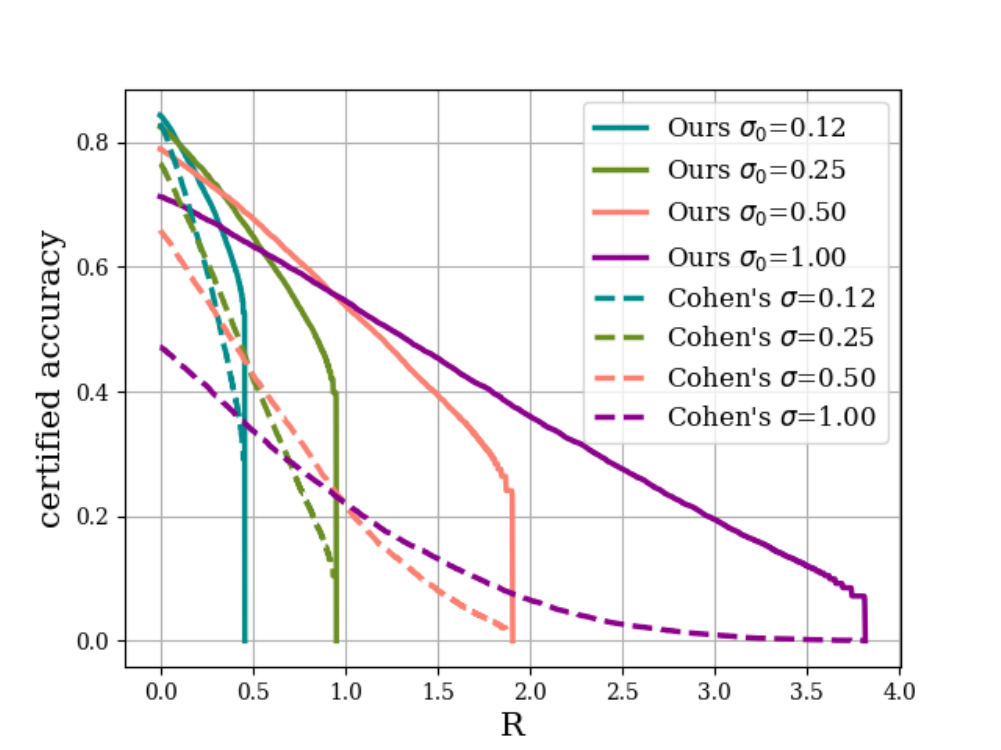}
        \vspace{-0.3 in}
    \caption{\textmd{\textbf{Certified accuracy comparison on CIFAR10 }.}}\vspace{-0.2in}
    \label{fig:cifar10}
\end{figure}

\begin{figure}[t]
    \centering
    \vspace{-0.1in}
    \includegraphics[width=85mm]{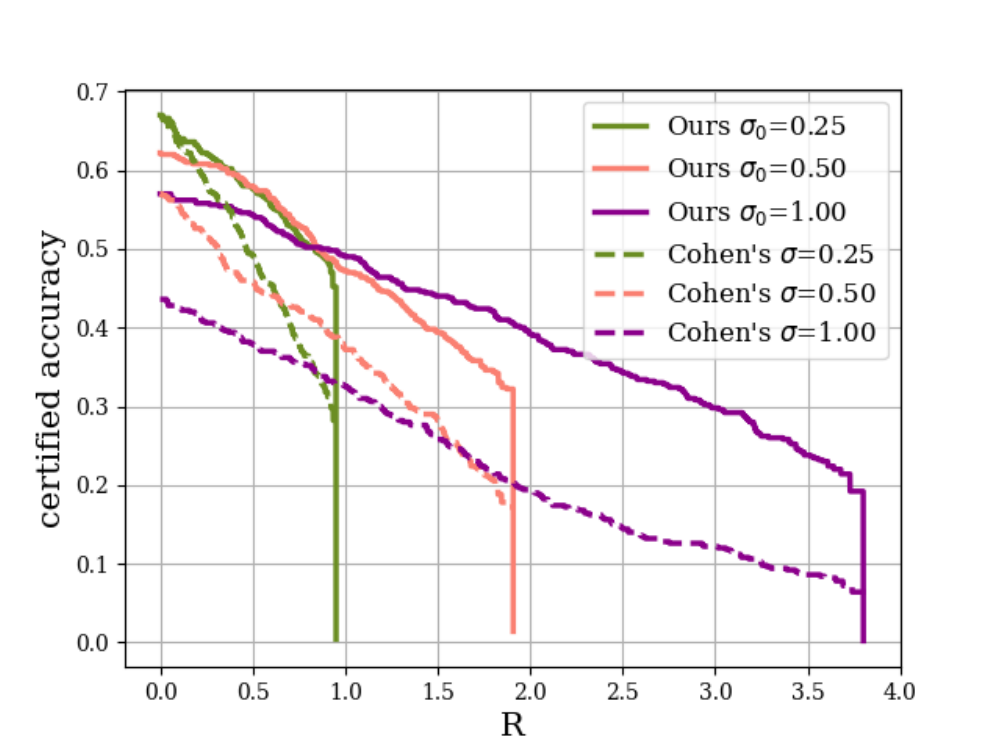}
    \vspace{-0.3 in}
    \caption{\textmd{\textbf{Certified accuracy comparison on ImageNet }.}}\vspace{-0.2in}
    \label{fig:imagenet}
\end{figure}

\subsection{Best Performance Comparison}

We compare our anisotropic randomized smoothing with the state-of-the-art randomized smoothing methods against $\ell_2$ perturbations. Specifically, \cite{cohen2019certified} derives the first tight certified radius for Gaussian noise, which is shown to have the best performance of certified robustness over a wide range of distributions \cite{yang2020randomized}. Also, \cite{zhang2020black} proposes an optimization-based randomized smoothing method with a special-designed distribution that outperforms the Gaussian noise. \cite{alfarra2020data} proposes to optimize the variance of the noise distribution for each input to provide a data-dependent randomized smoothing. We compare our method with these state-of-the-art methods in Table \ref{tab: comparison CIFAR10} and \ref{tab: comparison ImageNet}.

Both on the CIFAR10 and the ImageNet, our method significantly improves the certified accuracy. For instance, we observe the best improvement of $29\%$ at $R=1.5$ on CIFAR10, and $9\%$ at $R=2.5$ on ImageNet. Different from the isotropic methods, when the certified radius is large, our method can still provide certified protection for the images. 

\begin{table*}[!h]
    \centering
     \resizebox{\textwidth}{!}{\begin{tabular}{ c  c c c c c c c c c c c c c c c}
    \hline
    \hline
    Radius & 0.0 & 0.25 & 0.50 & 0.75 &1.00 & 1.25 &1.50 & 1.75 & 2.00 & 2.25 & 2.50 & 2.75 & 3.00 & 3.25 & 3.50\\
    \hline
        Cohen's & 83\% & 61\% & 43\% & 32\% & 22\% & 17\% & 14\% & 9\% & 7\% & 4\% & 3\% & 2\% & 1\% & 0 & 0 \\
    Zhang's  & -- & 61\% & 46\% & 37\% & 25\% & 19\% & 16\% & 14\% &11\% & 9\% & -- & -- & -- & -- & --  \\
      Alfarra's  & 82\% & 68\% & 53\% & 44\% & 32\% & 21\% & 14\% & 8\% & 4\% & 1\% & -- & -- & -- & -- & --\\
        Ours &\textbf{84\%} &\textbf{75\%} & \textbf{68\%} & \textbf{61\%} & \textbf{55\%} & \textbf{50\%} & \textbf{45\%} & \textbf{41\%} & \textbf{36\%} & \textbf{32\%} & \textbf{28\%} & \textbf{23\%} & \textbf{19\%} & \textbf{16\%} & \textbf{12\%}\\
    \hline
    \hline
    \end{tabular}}
    \caption{Certified Accuracy ($\%$) on CIFAR10.}
    \label{tab: comparison CIFAR10}
\end{table*}

\begin{table}[!h]
    \centering
    \resizebox{\columnwidth}{!}{%
    \begin{tabular}{ cccc c c c c c }
    \hline
    \hline
    Radius    & 0.0 & 0.5 & 1.0 & 1.5 & 2.0 & 2.5 & 3.0 & 3.5\\
    \hline
        Cohen's & \textbf{67\%} & 49\% & 37\% & 28\% & 19\% &15\% & 12\% & 9\% \\
        Zhang's & -- & 50\% & 39\% & 31\% & 21\% & 17\% & 13\% & 10\% \\
        Alfarra's & 62\% & \textbf{59\%} & 48\% &43\% &31\% &25\% &22\% &19\%  \\
        Ours   & \textbf{67\%} & 58\% & \textbf{49\%} & \textbf{44\%} & \textbf{39\%} & \textbf{34\%} & \textbf{30\%} & \textbf{24\%}\\
    \hline
    \hline
    \end{tabular}
    }
    \caption{Certified Accuracy ($\%$) on ImageNet.}
    \label{tab: comparison ImageNet}
\end{table}

\subsection{Enhanced Robustness against Pre-Perturbation}


\begin{table}[!h]
    \centering
     \resizebox{\columnwidth}{!}{%
    \begin{tabular}{ cc c c c c c c c c}
    \hline
    \hline
    Radius    & 0.0 & 0.5 & 1.0 & 1.5 & 2.0 & 2.5 & 3.0 & 3.5\\
    \hline
        Cohen's & 44\% & 38\% &33\% & 26\% &19\% &15\% &12\% &9\% \\
        PGD & 30\% & 24\% &19\% &14\% &10\% &7\% &6\% & 5\%  \\
        loss ($\%$) & -31\% & \textbf{-37\%} & -42\% & -46\% & -47\% &-53\% & -50\% & -44\% \\
    \hline
        Ours & 57\% & 54\% & 49\% & 44\% & 39\% &34\% &30\% &24\% \\
        PGD & 40\% & 33\% & 30\% & 27\% & 23\% &20\% &17\% &14\% \\
        loss ($\%$) & \textbf{-30\%} & -39\% & \textbf{-39\%} &\textbf{-39\%} & \textbf{-41\%} & \textbf{-41\%} & \textbf{-43\%} &\textbf{-42\%} \\
    \hline
    \hline
    \end{tabular}
    }
    \caption{Certified Accuracy ($\%$) before and after the PGD attack (pre-perturbing the inputs before certification).}\vspace{-0.1in}
    \label{tab:robust of RM}
\end{table}


We also study a new interesting problem for randomized smoothing methods which has not been discussed in existing works \cite{cohen2019certified, zhang2020black,alfarra2020data}. 
In general, the randomized smoothing is applied to certify the clean images such that the prediction can be guaranteed to be correct if the clean image is perturbed within the certified radius. If the adversary crafts an adversarial example for the certification, the guaranteed prediction could be consistently wrong within the certified radius (consistent with the class label for the adversarial example), which contrarily enhances the attack performance instead of protecting the inputs. Therefore, it is important to build a ``shield'' for the randomized smoothing methods. Different from isotropic randomized smoothing methods, e.g., \cite{cohen2019certified}, our generated mean map can re-calibrate the data point towards its correct label, which makes it harder to be attacked. In addition, our anisotropic randomized smoothing can leverage larger and more complicated noises to mitigate malicious perturbations.

We evaluate the performance of our method on defending against the strong white-box attacks, e.g., PGD attack \cite{madry2018towards} (pre-perturbing). Specifically, the image is perturbed by the PGD attack with max $\ell_\infty$ perturbation $16/255$ and $10$ iterations before the certification, then we compute the certified accuracy on the clean image and the adversarial image and present the loss of the certified accuracy. The variance $\sigma$ and the target variance $\sigma_0$ are both set to $1.0$. Other experimental settings are the same as Section \ref{exp:certified accuracy}.

The experimental results are shown in Table \ref{tab:robust of RM}. Although the certified accuracy for both our method and \cite{cohen2019certified} are degraded by the pre-perturbing PGD attacks, our anisotropic randomized smoothing is still more robust (less degradation) than \cite{cohen2019certified} against such attacks in almost all the cases.

\section{Discussions}
\label{sec: Discussion}
\subsection{Generalization to Other Noise Distributions against Different $\ell_p$ Perturbations}

In this paper, we take the anisotropic Gaussian noise as an example for deriving the certified radii and designing the Noise Generator. In fact, anisotropic randomized smoothing and Noise Generator are general methods that can be used for different distributions against different perturbations. Here we show an extension of our theory to anisotropic Laplace noise against $\ell_1$ perturbations in Theorem \ref{thm:our Laplace}. 

\begin{restatable}[\textbf{Randomized Smoothing with Anisotropic Laplace Noise}]{theorem}{thmthree}
\label{thm:our Laplace}
Let $f : \mathbb{R}^d \to \mathcal{Y}$ be any deterministic or random function, and let $\epsilon \sim \mathcal{L}(\mu,\mathbf{\Lambda})$, where $\Lambda=diag(\lambda_1, \lambda_2, ...,\lambda_d)$. Let $g''$ be defined as $ g''(x)=\arg \max_{c\in \mathcal{Y}} \mathbb{P}(f(x+\epsilon)=c)$. Suppose that for a specific $x\in \mathbb{R}^d$, there exist $c_A\in \mathcal{Y}$ and $\underline{p_A}$, $\overline{p_B} \in [0,1]$ such that:
\begin{equation}\small
    \mathbb{P}(f(x+\epsilon)=c_A) \geq \underline{p_A} \geq \overline{p_B} \geq \max_{c \neq c_A} \mathbb{P}(f(x+\epsilon)=c)
\end{equation}
Then $g''(x+\delta)=c_A$ for all $||\delta||_1 < R$, where
\begin{equation}\small
\begin{aligned}
    R= \max & \{\frac{1}{2} \min \{\lambda_i\} \log (\underline{p_A}/\overline{p_B}), \\
    &-\min \{\lambda_i\} \log(1-\underline{p_A}+\overline{p_B})\}
\end{aligned}
\end{equation}
where $\lambda_i$ is the variance on $i$-th dimension of the input.

\end{restatable}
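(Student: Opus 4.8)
The plan is to follow the likelihood-ratio route used for the Gaussian case, but to exploit the fact that the Laplace log-density is a sum of coordinatewise absolute values, so that the anisotropy collapses cleanly onto $\lambda_{\min} := \min\{\lambda_i\}$. Write $P$ for the law of $x+\epsilon$ (centered at $a := x+\mu$) and $Q$ for the law of $(x+\delta)+\epsilon$ (centered at $b := x+\delta+\mu$); both are anisotropic Laplace with the same scale matrix $\mathbf{\Lambda}$, so their densities share the same normalizing constant and
\begin{equation*}
\log\frac{q(z)}{p(z)} = \sum_{i=1}^d \frac{|z_i - a_i| - |z_i - b_i|}{\lambda_i}.
\end{equation*}
Since $g''(x+\delta)$ is governed by the class masses under $Q$ while the hypotheses constrain the masses under $P$, it suffices to lower bound $\mathbb{P}_Q(f=c_A)$ and upper bound $\mathbb{P}_Q(f=c)$ for every $c\neq c_A$, then force the former to exceed the latter; this is the standard reduction, valid regardless of which worst-case classifier attains the extremes.

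The crux of the argument, and the step where anisotropy is handled, is a uniform bound on the likelihood ratio. By the reverse triangle inequality, $\left|\,|z_i - a_i| - |z_i - b_i|\,\right| \le |a_i - b_i| = |\delta_i|$ coordinatewise, so
\begin{equation*}
\left|\log\frac{q(z)}{p(z)}\right| \le \sum_{i=1}^d \frac{|\delta_i|}{\lambda_i} \le \frac{1}{\lambda_{\min}}\sum_{i=1}^d |\delta_i| = \frac{\|\delta\|_1}{\lambda_{\min}} =: \eta,
\end{equation*}
where the middle inequality replaces each $\lambda_i$ by the smallest scale. Hence $e^{-\eta} \le q(z)/p(z) \le e^{\eta}$ pointwise, i.e. $P$ and $Q$ have a \emph{bounded max-divergence} controlled entirely by $\|\delta\|_1/\lambda_{\min}$. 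I expect this to be the main obstacle only in the sense of spotting it: once the coordinatewise bound is in hand, the $\min$ over scales and the $\ell_1$ geometry appear automatically, which is exactly why the radius depends on $\lambda_{\min}$ and $\|\delta\|_1$ rather than on the full vector of scales.

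From the pointwise bounds I would integrate over decision regions to obtain two independent certificates. Integrating $q \ge e^{-\eta} p$ over $\{f=c_A\}$ gives $\mathbb{P}_Q(f=c_A) \ge e^{-\eta}\underline{p_A}$, and integrating $q \le e^{\eta} p$ over $\{f=c\}$ gives $\mathbb{P}_Q(f=c) \le e^{\eta}\overline{p_B}$; requiring $e^{-\eta}\underline{p_A} > e^{\eta}\overline{p_B}$ yields $\eta < \tfrac12\log(\underline{p_A}/\overline{p_B})$, which is the first term of $R$. Alternatively, integrating $q \le e^{\eta} p$ over the complement $\{f\neq c_A\}$ gives $\mathbb{P}_Q(f=c_A) \ge 1 - e^{\eta}(1-\underline{p_A})$, and combined with the same upper bound on $\mathbb{P}_Q(f=c)$ the requirement $1 - e^{\eta}(1-\underline{p_A}) > e^{\eta}\overline{p_B}$ gives $\eta < -\log(1 - \underline{p_A} + \overline{p_B})$, which is the second term.

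Finally I would assemble the maximum: if $\|\delta\|_1 < R$ then $\eta = \|\delta\|_1/\lambda_{\min}$ lies strictly below at least one of the two thresholds $\tfrac12\log(\underline{p_A}/\overline{p_B})$ and $-\log(1-\underline{p_A}+\overline{p_B})$, so at least one certificate fires and $\mathbb{P}_Q(f=c_A) > \max_{c\neq c_A}\mathbb{P}_Q(f=c)$, which is precisely $g''(x+\delta)=c_A$. The only care needed is the boundary behavior when $\overline{p_B}=0$ or $1-\underline{p_A}+\overline{p_B}$ degenerates, which I would treat as limiting cases. I would also remark, in parallel with Theorem~\ref{thm:our thm}, that setting all $\lambda_i$ equal makes $\lambda_{\min}=\lambda_i$ and recovers the isotropic Laplace certificate, confirming that the statement is a genuine generalization.
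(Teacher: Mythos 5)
Your proof is correct, and it reaches the paper's radius by a genuinely more elementary route. The paper mirrors its Gaussian argument: it first proves an anisotropic Neyman--Pearson lemma for Laplace noise (level sets of the statistic $T(z)=\sum_{i=1}^d(|z_i-\delta_i|-|z_i|)/\lambda_i$), uses it to pass from the classifier events to extremal sets $A=\{T\ge\beta_1\}$ and $B=\{T\le\beta_2\}$ calibrated so that $\mathbb{P}(X\in A)=\underline{p_A}$ and $\mathbb{P}(X\in B)=\overline{p_B}$, and only then bounds $\mathbb{P}(Y\in A)$ and $\mathbb{P}(Y\in B)$ by pulling $\exp(-T(x))$ out of the integrals with the triangle-inequality bound $|T(x)|\le\sum_i|\delta_i|/\lambda_i$. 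You skip the Neyman--Pearson step entirely: the reverse triangle inequality gives the pointwise max-divergence bound $e^{-\eta}\le q(z)/p(z)\le e^{\eta}$ with $\eta=\|\delta\|_1/\lambda_{\min}$, and integrating this directly over $\{f=c_A\}$, $\{f=c\}$, and $\{f\ne c_A\}$ yields the same three estimates ($e^{-\eta}\underline{p_A}$, $e^{\eta}\overline{p_B}$, and $1-e^{\eta}(1-\underline{p_A})$), hence the same two certificates and the same $R$. The comparison is instructive: since the paper ultimately bounds its NP sets with the same crude pointwise inequality you use, the NP machinery buys no extra tightness here --- the two proofs give identical radii --- while your argument is shorter, self-contained (it is essentially the differential-privacy-style argument of Lecuyer et al.), and makes transparent why only $\lambda_{\min}$ and $\|\delta\|_1$ survive. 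What the paper's structure retains is scaffolding for a potentially sharper result: if one evaluated the Laplace probabilities of $A$ and $B$ exactly (as in the tight isotropic $\ell_1$ analysis), the NP route could improve the bound, whereas the max-divergence route cannot. One small point to make explicit when writing this up: the upper bound $\mathbb{P}_Q(f=c)\le e^{\eta}\overline{p_B}$ must be applied uniformly over all $c\ne c_A$, which is licensed by the hypothesis $\overline{p_B}\ge\max_{c\ne c_A}\mathbb{P}(f(x+\epsilon)=c)$; with that, the strict inequalities and your limiting-case remarks go through as stated.
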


\begin{proof}

See detailed proof in Appendix \ref{apd: proof thm 7.1}.
\end{proof}

Similarly, for other noise distributions, e.g., Exponential, Uniform, or Pareto distributions \cite{yang2020randomized}, we can derive the certified radius with anisotropic noise using the same method. In addition, once the certified radius is derived, our Noise Generator can be used for generating the parameters for any anisotropic noise distribution.

\subsection{Runtime}

Our anisotropic randomized smoothing relies on the Noise Generator to provide optimal protection, which may need extra runtime for generating the mean and variance than traditional RS methods. However, the extra runtime resulting from the Noise Generator is actually negligible. Our model (including Noise Generator) can be trained offline and tested online as traditional RS methods. We evaluate the online certification runtime for our method and \cite{cohen2019certified} on ImageNet with four Tesla V100 GPUs and $2,000$ batch size, the average runtimes over $500$ samples are $27.43$s and $27.09$s per sample for our method and Cohen et al.'s method, respectively. Thus, the Noise Generator will not affect the overall runtime of the certification much.

\section{Conclusion}
\label{sec: Conclusion}
We study a new direction of randomized smoothing: the anisotropy of the distributions. Facilitated by the proposed Noise Generator, our anisotropic randomized smoothing significantly improves the certified robustness, which has been extensively evaluated on CIFA10 and ImageNet datasets. The anisotropic randomized smoothing and the Noise Generator can be adapted to various distributions for further improving the certified robustness in the future.

\newpage
\bibliography{ARS.bib}
\bibliographystyle{icml2022}

\newpage
\appendix
\onecolumn

\section{Proofs for Theorem \ref{thm:our thm}}
\label{apd: proof thm 4.1}
We prove the Theorem \ref{thm:our thm} in this section. Similar to Cohen et al. \cite{cohen2019certified}, Theorem \ref{thm:our thm} is based on Neyman-Pearson lemma \cite{neyman1933ix}. Therefore, we will review the Neyman-Pearson lemma and then derive the certified radius for anisotropic Gaussian noise.

\begin{lemma}[\textbf{Neyman-Pearson} \cite{neyman1933ix}]
\label{lemma:neyman-pearson}

Let $X$ and $Y$ be random variables in $\mathbb{R}^d$ with probability density functions (PDF) $f_X$ and $f_Y$. Let $h:\mathbb{R}^d \rightarrow \{0,1\}$ be a random or deterministic function. Then:
\begin{itemize}

\item[(1)] If $S=\{z\in \mathbb{R}^d: \frac{f_Y(z)}{f_X(z)}\leq{t}\}$ for some $t>0$ and $\mathbb{P}(h(X)=1)\ge \mathbb{P}(X\in S)$, then $\mathbb{P}(h(Y)=1) \ge \mathbb{P}(Y\in S)$;

\item[(2)] If $S=\{z\in \mathbb{R}^d: \frac{f_Y(z)}{f_X(z)}\ge {t}\}$ for some $t>0$ and $\mathbb{P}(h(X)=1)\leq \mathbb{P}(X\in S)$, then $\mathbb{P}(h(Y)=1) \leq \mathbb{P}(Y\in S)$.
\end{itemize}
\end{lemma}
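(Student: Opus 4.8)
The plan is to prove both parts by the classical likelihood-ratio argument, reducing the ``random or deterministic'' function $h$ to its acceptance probability and then comparing the integral of this acceptance probability against the indicator of $S$, weighted by each of the two densities. First I would introduce the notation $\bar h(z) := \mathbb{P}(h(z)=1) \in [0,1]$, so that $\mathbb{P}(h(X)=1) = \int \bar h(z) f_X(z)\,dz$ and $\mathbb{P}(h(Y)=1) = \int \bar h(z) f_Y(z)\,dz$, while $\mathbb{P}(X\in S) = \int \mathbf{1}[z\in S]\, f_X(z)\,dz$ and likewise for $Y$. Passing to $\bar h$ treats the randomized and deterministic cases uniformly, since a deterministic $h$ simply gives $\bar h \in \{0,1\}$.

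For part (1), I would examine the difference $\mathbb{P}(h(Y)=1) - \mathbb{P}(Y\in S) = \int \big(\bar h(z) - \mathbf{1}[z\in S]\big) f_Y(z)\,dz$ and split the domain into $S$ and its complement $S^{c}$. On $S$ the factor $\bar h - 1 \le 0$ and the defining inequality $f_Y \le t f_X$ holds, so multiplying the density bound by the nonpositive factor reverses it and gives $(\bar h -1) f_Y \ge t(\bar h - 1) f_X$. On $S^{c}$ we have $f_Y \ge t f_X$ together with $\bar h \ge 0$, yielding $\bar h f_Y \ge t \bar h f_X$. Adding the two regional bounds and factoring out $t>0$, the right-hand side collapses exactly to $t\,[\mathbb{P}(h(X)=1) - \mathbb{P}(X\in S)]$, which is $\ge 0$ by hypothesis. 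This establishes the claimed inequality.

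Part (2) is entirely symmetric. With $S = \{z : f_Y(z)/f_X(z) \ge t\}$ I would bound the same difference from above: on $S$ the relations $\bar h - 1 \le 0$ and $f_Y \ge t f_X$ give $(\bar h - 1) f_Y \le t(\bar h - 1) f_X$, and on $S^{c}$ the relations $\bar h \ge 0$ and $f_Y \le t f_X$ give $\bar h f_Y \le t \bar h f_X$, so that $\mathbb{P}(h(Y)=1) - \mathbb{P}(Y\in S) \le t\,[\mathbb{P}(h(X)=1) - \mathbb{P}(X\in S)] \le 0$.

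The main obstacle here is not any deep idea but the careful bookkeeping of inequality directions when the density comparison is multiplied by factors whose sign is fixed but differs in character across the two regions; I would state the sign of each factor explicitly to avoid errors. The one genuinely delicate point is measure-theoretic: the likelihood ratio $f_Y/f_X$ is undefined where $f_X = 0$, and the boundary set $\{f_Y/f_X = t\}$ must be assigned consistently to $S$, which the $\le$ convention in the statement already does. Since the Gaussian densities used throughout the paper are strictly positive everywhere, these null-set complications do not arise in the application, and I would remark on this to keep the argument clean.
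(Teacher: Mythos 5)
Your proposal is correct and follows essentially the same route as the proof the paper relies on (it defers to Cohen et al., whose argument is exactly this one): replace $h$ by its acceptance probability $\bar h\in[0,1]$, split $\int(\bar h(z)-\mathbf{1}[z\in S])f_Y(z)\,dz$ over $S$ and $S^c$, and use the sign of each factor together with the threshold inequality on $f_Y/f_X$ to bound the difference below (resp.\ above) by $t\,[\mathbb{P}(h(X)=1)-\mathbb{P}(X\in S)]$. Your closing remark on the $f_X=0$ null set and the assignment of the boundary $\{f_Y/f_X=t\}$ is a correct added precision rather than a deviation, and is indeed moot for the strictly positive Gaussian and Laplace densities used in the paper.
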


\begin{proof}
See the detailed proof in Cohen et al. \cite{cohen2019certified}
\end{proof}

Then, we prove the special case of Lemma \ref{lemma:neyman-pearson} when the random variables follows independent anisotropic Gaussian distribution.

\begin{lemma}[]
\label{lemma: anisotropic NP lemma}

Let $X \sim \mathcal{N}(x+\mu,\mathbf{\Sigma})$ and $Y \sim \mathcal{N}(x+\mu+\delta, \mathbf{\Sigma})$, where $\delta=[\delta_1, \delta_2, ..., \delta_d]$, $\mu=[\mu_1, \mu_2, ..., \mu_d]$ and $\mathbf{\Sigma}=diag(\sigma_1^2, \sigma_2^2, ..., \sigma_d^2)$. Let $h: \mathbb{R}^d \to \{0,1\}$ be any deterministic or random function. Then:
\begin{itemize}
    \item[(1)] If $S= \{ z\in \mathbb{R}^d: \sum_{i=1}^d \frac{\delta_i}{\sigma_i^2}z_i \leq \beta \}$ for some $\beta$ and $\mathbb{P}(h(X)=1) \ge \mathbb{P}(X\in S)$, then $\mathbb{P}(h(Y)=1) \ge \mathbb{P}(Y\in S)$

    \item[(2)] If $S= \{ z\in \mathbb{R}^d: \sum_{i=1}^d \frac{\delta_i}{\sigma_i^2}z_i \geq \beta \}$ for some $\beta$ and $\mathbb{P}(h(X)=1) \le \mathbb{P}(X\in S)$, then $\mathbb{P}(h(Y)=1) \le \mathbb{P}(Y\in S)$
\end{itemize}
\end{lemma}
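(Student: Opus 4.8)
The plan is to reduce this lemma directly to the Neyman-Pearson lemma (Lemma \ref{lemma:neyman-pearson}) by showing that each linear half-space in the statement coincides \emph{exactly} with a likelihood-ratio threshold set $\{z : f_Y(z)/f_X(z) \le t\}$ (resp. $\ge t$). The structural fact I would exploit is that $X$ and $Y$ share the same diagonal covariance $\mathbf{\Sigma}$ and differ only in their means by $\delta$; consequently the likelihood ratio collapses to the exponential of an \emph{affine} function of $z$, whose sublevel and superlevel sets are precisely the half-spaces $\{z : \sum_i \frac{\delta_i}{\sigma_i^2} z_i \le \beta\}$. (The degenerate case $\delta = \mathbf{0}$ is trivial, since then $X$ and $Y$ are identically distributed.)

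First I would write out the product-form Gaussian densities $f_X$ and $f_Y$ and form their ratio. Writing $a_i := x_i + \mu_i$ for the $i$-th mean coordinate of $X$, the exponent of $f_Y(z)/f_X(z)$ is $\sum_{i=1}^d \frac{-(z_i - a_i - \delta_i)^2 + (z_i - a_i)^2}{2\sigma_i^2}$. Expanding the squares, the quadratic-in-$z$ terms cancel, which is exactly the step where equal covariances are essential, leaving an affine expression and hence
\begin{equation*}
\frac{f_Y(z)}{f_X(z)} = \exp\Big( \sum_{i=1}^d \frac{\delta_i}{\sigma_i^2} z_i - C \Big), \qquad C := \sum_{i=1}^d \Big( \frac{\delta_i a_i}{\sigma_i^2} + \frac{\delta_i^2}{2\sigma_i^2} \Big).
\end{equation*}

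Next, because $\log$ is strictly increasing, for any $t > 0$ the inequality $f_Y(z)/f_X(z) \le t$ is equivalent to $\sum_{i=1}^d \frac{\delta_i}{\sigma_i^2} z_i \le \log t + C =: \beta$. Since $t \mapsto \log t + C$ is a bijection from $(0,\infty)$ onto $\mathbb{R}$, every half-space in the statement (with $\beta \in \mathbb{R}$) arises from a unique $t > 0$ and coincides set-theoretically with the corresponding threshold set, and the non-strict inequality directions match. Part (1) then follows at once by feeding this identification into part (1) of Lemma \ref{lemma:neyman-pearson}, and part (2) follows identically using the reversed inequality and part (2) of Lemma \ref{lemma:neyman-pearson}. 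The only points needing care, which I would present as the crux, are verifying the quadratic cancellation (which hinges on the shared $\mathbf{\Sigma}$) and confirming the $t \leftrightarrow \beta$ bijection so that the half-space family is genuinely the full threshold-set family that Neyman-Pearson requires; beyond these the argument is a direct translation.
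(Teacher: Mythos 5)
Your proposal is correct and follows essentially the same route as the paper's proof: you compute the likelihood ratio of the two equal-covariance Gaussians, observe that the quadratic terms cancel so the ratio is $\exp\bigl(\sum_{i=1}^d \tfrac{\delta_i}{\sigma_i^2} z_i - c\bigr)$ with the same constant $c$, identify each half-space with a likelihood-ratio threshold set via $\beta = \log t + c$, and then invoke Lemma \ref{lemma:neyman-pearson}. Your explicit remarks on the $t \leftrightarrow \beta$ bijection and the degenerate case $\delta = \mathbf{0}$ are minor refinements the paper leaves implicit, but they do not change the argument.
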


\begin{proof}
Let $X \sim \mathcal{N}(x+\mu,\mathbf{\Sigma})$ and $Y \sim \mathcal{N}(x+\mu+\delta, \mathbf{\Sigma})$. We have the probability density functions $f_X$ and $f_Y$ as:

\begin{equation}
    f_X(z)= k \exp \ \left( -\sum_{i=1}^d \frac{1}{2\sigma_i^2}[z_i-(x_i+\mu_i)]^2 \right) \nonumber
\end{equation}

\begin{equation}
    f_Y(z)=k \exp \ \left(-\sum_{i=1}^d \frac{1}{2\sigma_i^2}[z_i-(x_i+\mu_i+\delta_i)]^2 \right) \nonumber
\end{equation}

where $k$ is a constant. The ratio of the PDF is:

\begin{equation}
\begin{aligned}
    \frac{f_Y(z)}{f_X(z)} & =\frac{\exp \ (-\sum_{i=1}^d \frac{1}{2\sigma_i^2}[z_i-(x_i+\mu_i+\delta_i)]^2)}{\exp \  (-\sum_{i=1}^d \frac{1}{2\sigma_i^2}[z_i-(x_i+\mu_i)]^2)}\\
    & =\exp \ \left(\sum_{i=1}^d \frac{1}{2\sigma_i^2} [2z_i\delta_i-2(x_i+\mu_i)\delta_i-\delta_i^2] \right)\\
    & = \exp \ \left(\sum_{i=1}^d \frac{z_i\delta_i}{\sigma_i^2}-\sum_{i=1}^d \frac{1}{2\sigma_i^2}[2(x_i+\mu_i)\delta_i+\delta_i^2)]\right)\\
    & = \exp \ (\sum_{i=1}^d \frac{\delta_i}{\sigma_i^2}z_i-c)
\end{aligned}
\end{equation}

where $c=\sum_{i=1}^d \frac{1}{2\sigma_i^2}[2(x_i+\mu_i)\delta_i+\delta_i^2)] $, which is constant w.r.t. $z$. Let $\beta=\log t + c$, we have:

\begin{equation}
\begin{aligned}
    \sum_{i=1}^d \frac{\delta_i}{\sigma_i^2}z_i \leq \beta \Longleftrightarrow \exp \ (\sum_{i=1}^d \frac{\delta_i}{\sigma_i^2}z_i-c) \leq t \\
    \sum_{i=1}^d \frac{\delta_i}{\sigma_i^2}z_i \geq \beta \Longleftrightarrow \exp \ (\sum_{i=1}^d \frac{\delta_i}{\sigma_i^2}z_i-c) \geq t \nonumber
\end{aligned}
\end{equation}

Therefore, for any $\beta$, there is some $t>0$ for which:

\begin{equation}
    \{ z: \sum_{i=1}^d \frac{\delta_i}{\sigma_i^2}z_i \leq \beta \} = \{z: \frac{f_Y(z)}{f_X(z)} \le t\} \quad \text{and} \quad  \{ z: \sum_{i=1}^d \frac{\delta_i}{\sigma_i^2}z_i \geq \beta \} = \{z: \frac{f_Y(z)}{f_X(z)} \ge t\}
\end{equation}
This completes the proof.
\end{proof}

Then, we prove the Theorem \ref{thm:our thm}.

\thmone*

\begin{proof}
To prove $g'(x+\delta)=c_A$, it is equivalent to prove that
\begin{equation}
\label{eq:17}
    \mathbb{P}(f(x+\delta+\epsilon)=c_A) > \mathbb{P}(f(x+\delta+\epsilon)=c_B)
\end{equation}
where $c_B$ denotes the second probable class.

For brevity, define the random variables
\begin{equation}
\begin{aligned}
X &:= x+\epsilon = \mathcal{N}(x+\mu,\mathbf{\Sigma})\\
Y &:= x+\delta+\epsilon= \mathcal{N}(x+\mu+\delta,\mathbf{\Sigma})
\end{aligned}    
\end{equation}

Then, proving Eq. (\ref{eq:17}) is equivalent to prove:

\begin{equation}
\label{eq:key eq}
    \mathbb{P}(f(Y)=c_A) > \mathbb{P}(f(Y)=c_B)
\end{equation}

From Eq. (\ref{eq:condition}) we have:

\begin{equation}
\label{eq:20}
    \mathbb{P}(f(X)=c_A) \ge \underline{p_A} \quad \text{and} \quad  \mathbb{P}(f(X)=c_B) \le \overline{p_B}
\end{equation}

Then, we will show how to use Lemma \ref{lemma: anisotropic NP lemma} to prove Eq. (\ref{eq:key eq}) from Eq. (\ref{eq:20}).

First, we define the half-spaces:

\begin{equation}
\label{eq:21}
\begin{aligned}
A & := \{z: \sum_{i=1}^d \frac{\delta_i}{\sigma_i^2} (z_i-\mu_i-x_i) \le \sqrt{\sum_{i-1}^d \frac{\delta_i^2}{\sigma_i^2}} \Phi^{-1}(\underline{p_A})   \} \\
B & := \{z: \sum_{i=1}^d \frac{\delta_i}{\sigma_i^2} (z_i -\mu_i -x_i) \ge \sqrt{\sum_{i-1}^d \frac{\delta_i^2}{\sigma_i^2}} \Phi^{-1}(1-\underline{p_B})  \}
\end{aligned}
\end{equation}

where $\Phi^{-1}$ is the inverse of the standard Gaussian CDF.

Then, we will have:
\begin{equation}
\label{eq:22}
\begin{aligned}
    \mathbb{P}(X \in A) &= \mathbb{P}(\sum_{i=1}^d \frac{\delta_i}{\sigma_i^2} (X_i-\mu_i-x_i) \le \sqrt{\sum_{i-1}^d \frac{\delta_i^2}{\sigma_i^2}} \Phi^{-1}(\underline{p_A}))    \\
    & =\mathbb{P}(\sum_{i=1}^d \frac{\delta_i}{\sigma_i^2} \mathcal{N}(0,\sigma_i^2) \le \sqrt{\sum_{i-1}^d \frac{\delta_i^2}{\sigma_i^2}} \Phi^{-1}(\underline{p_A}))\\
    & =\mathbb{P}(\sum_{i=1}^d \mathcal{N}(0,\frac{\delta_i^2}{\sigma_i^2}) \le \sqrt{\sum_{i-1}^d \frac{\delta_i^2}{\sigma_i^2}} \Phi^{-1}(\underline{p_A}))\\
    & =\mathbb{P}(\sqrt{\sum_{i-1}^d \frac{\delta_i^2}{\sigma_i^2}} \  \mathcal{N}(0,1) \le \sqrt{\sum_{i-1}^d \frac{\delta_i^2}{\sigma_i^2}} \Phi^{-1}(\underline{p_A}))\\
    & = \mathbb{P} (\mathcal{N}(0,1) \le \Phi^{-1}(\underline{p_A}))\\
    & = \underline{p_A}
\end{aligned}
\end{equation}

\begin{equation}
\label{eq:23}
\begin{aligned}
    \mathbb{P}(X \in B) &= \mathbb{P}(\sum_{i=1}^d \frac{\delta_i}{\sigma_i^2} (X_i-\mu_i-x_i) \ge \sqrt{\sum_{i-1}^d \frac{\delta_i^2}{\sigma_i^2}} \Phi^{-1}(1-\overline{p_B}))    \\
    & =\mathbb{P}(\sum_{i=1}^d \frac{\delta_i}{\sigma_i^2} \mathcal{N}(0,\sigma_i^2) \ge \sqrt{\sum_{i-1}^d \frac{\delta_i^2}{\sigma_i^2}} \Phi^{-1}(1-\overline{p_B}))\\
    & =\mathbb{P}(\sum_{i=1}^d \mathcal{N}(0,\frac{\delta_i^2}{\sigma_i^2}) \ge \sqrt{\sum_{i-1}^d \frac{\delta_i^2}{\sigma_i^2}} \Phi^{-1}(1-\overline{p_B}))\\
    & =\mathbb{P}(\sqrt{\sum_{i-1}^d \frac{\delta_i^2}{\sigma_i^2}} \  \mathcal{N}(0,1) \ge \sqrt{\sum_{i-1}^d \frac{\delta_i^2}{\sigma_i^2}} \Phi^{-1}(1-\overline{p_B}))\\
    & = \mathbb{P} (\mathcal{N}(0,1) \ge \Phi^{-1}(1-\overline{p_B}))\\
    & = \overline{p_B}
\end{aligned}
\end{equation}

Now we have $\mathbb{P}(X \in A)=\underline{p_A}$ (Eq. (\ref{eq:22})), so by Eq. (\ref{eq:20}) we have $\mathbb{P}(f(X) = c_A) \ge \mathbb{P}(X\in A)$. Using Neyman-Pearson Lemma with $h(z):=\textbf{1}[f(z)=c_A]$, we have:

\begin{equation}
    \mathbb{P}(f(Y)=c_A) \ge \mathbb{P}(Y\in A)
\end{equation}

Similarly, by Eq. (\ref{eq:20}), Eq. (\ref{eq:23}) and Neyman-Pearson Lemma, we also have:

\begin{equation}
    \mathbb{P}(f(Y)=c_B) \le \mathbb{P}(Y\in B)
\end{equation}

Finally, to prove that $\mathbb{P}(f(Y)=c_A) \ge \mathbb{P}(f(Y)=c_B)$, we will need to prove:
\begin{equation}
    \mathbb{P}(f(Y)=c_A) \ge \mathbb{P}(Y\in A) \ge \mathbb{P}(Y\in B) \ge \mathbb{P}(f(Y)=c_B)
\end{equation}

Recall that $Y \sim \mathcal{N}(x+\mu+\delta,\mathbf{\Sigma})$, and $A := \{z: \sum_{i=1}^d \frac{\delta_i}{\sigma_i^2} (z_i-\mu_i-x_i) \le \sqrt{\sum_{i-1}^d \frac{\delta_i^2}{\sigma_i^2}} \Phi^{-1}(\underline{p_A}) \}$, we can have:

\begin{equation}
\label{eq:27}
\begin{aligned}
\mathbb{P}(Y\in A) & =\mathbb{P}(\sum_{i=1}^d \frac{\delta_i}{\sigma_i^2} (Y_i-\mu_i-x_i) \le \sqrt{\sum_{i-1}^d \frac{\delta_i^2}{\sigma_i^2}} \Phi^{-1}(\underline{p_A})) \\
& = \mathbb{P}(\sum_{i=1}^d \frac{\delta_i}{\sigma_i^2} \mathcal{N}(\delta_i, \sigma_i^2) \le \sqrt{\sum_{i-1}^d \frac{\delta_i^2}{\sigma_i^2}} \Phi^{-1}(\underline{p_A})) \\
& = \mathbb{P}(\sum_{i=1}^d \frac{\delta_i}{\sigma_i^2} \mathcal{N}(0, \sigma_i^2) + \sum_{i=1}^d \frac{\delta_i^2}{\sigma_i^2} \le \sqrt{\sum_{i-1}^d \frac{\delta_i^2}{\sigma_i^2}} \Phi^{-1}(\underline{p_A})) \\
& = \mathbb{P}(\sqrt{\sum_{i-1}^d \frac{\delta_i^2}{\sigma_i^2}} \  \mathcal{N}(0,1) + \sum_{i=1}^d \frac{\delta_i^2}{\sigma_i^2} \le \sqrt{\sum_{i-1}^d \frac{\delta_i^2}{\sigma_i^2}} \Phi^{-1}(\underline{p_A})) \\
& = \mathbb{P}(\mathcal{N}(0,1)  \le \Phi^{-1}(\underline{p_A})- \sqrt{ \sum_{i=1}^d \frac{\delta_i^2}{\sigma_i^2}}) \\
&=\Phi(\Phi^{-1}(\underline{p_A})- \sqrt{ \sum_{i=1}^d \frac{\delta_i^2}{\sigma_i^2}})
\end{aligned}
\end{equation}

Similarly, with $B := \{z: \sum_{i=1}^d \frac{\delta_i}{\sigma_i^2} (z_i -\mu_i -x_i) \ge \sqrt{\sum_{i-1}^d \frac{\delta_i^2}{\sigma_i^2}} \Phi^{-1}(1-\underline{p_B})  \}$, we have:

\begin{equation}
\begin{aligned}
\mathbb{P}(Y\in B) & =\mathbb{P}(\sum_{i=1}^d \frac{\delta_i}{\sigma_i^2} (Y_i-\mu_i-x_i) \ge \sqrt{\sum_{i-1}^d \frac{\delta_i^2}{\sigma_i^2}} \Phi^{-1}(1-\overline{p_B})) \\
& = \mathbb{P}(\sum_{i=1}^d \frac{\delta_i}{\sigma_i^2} \mathcal{N}(\delta_i, \sigma_i^2) \ge \sqrt{\sum_{i-1}^d \frac{\delta_i^2}{\sigma_i^2}} \Phi^{-1}(1-\overline{p_B})) \\
& = \mathbb{P}(\sum_{i=1}^d \frac{\delta_i}{\sigma_i^2} \mathcal{N}(0, \sigma_i^2) + \sum_{i=1}^d \frac{\delta_i^2}{\sigma_i^2} \ge \sqrt{\sum_{i-1}^d \frac{\delta_i^2}{\sigma_i^2}} \Phi^{-1}(1-\overline{p_B})) \\
& = \mathbb{P}(\sqrt{\sum_{i-1}^d \frac{\delta_i^2}{\sigma_i^2}} \  \mathcal{N}(0,1) + \sum_{i=1}^d \frac{\delta_i^2}{\sigma_i^2} \ge \sqrt{\sum_{i-1}^d \frac{\delta_i^2}{\sigma_i^2}} \Phi^{-1}(1-\overline{p_B})) \\
& = \mathbb{P}(\mathcal{N}(0,1)  \ge \Phi^{-1}(1-\overline{p_B})- \sqrt{ \sum_{i=1}^d \frac{\delta_i^2}{\sigma_i^2}}) \\
& = \mathbb{P}(\mathcal{N}(0,1)  \le \Phi^{-1}(\overline{p_B})+ \sqrt{ \sum_{i=1}^d \frac{\delta_i^2}{\sigma_i^2}}) \\
&=\Phi(\Phi^{-1}(\overline{p_B}) + \sqrt{ \sum_{i=1}^d \frac{\delta_i^2}{\sigma_i^2}})
\end{aligned}
\end{equation}

Therefore, to ensure $\mathbb{P}(Y\in A) \ge \mathbb{P}(Y \in B)$, we need:

\begin{equation}
\label{eq:29}
    \Phi(\Phi^{-1}(\underline{p_A})- \sqrt{ \sum_{i=1}^d \frac{\delta_i^2}{\sigma_i^2}}) \ge \Phi(\Phi^{-1}(\overline{p_B}) + \sqrt{ \sum_{i=1}^d \frac{\delta_i^2}{\sigma_i^2}}) \quad \Longleftrightarrow \quad \sqrt{ \sum_{i=1}^d \frac{\delta_i^2}{\sigma_i^2}} \le \frac{1}{2}(\Phi^{-1}(\underline{p_A})-\Phi^{-1}(\overline{p_B}))
\end{equation}

Let $\sigma_m$ denote the minimum $\sigma_i$, we have

\begin{equation}
    \sqrt{\sum_{i=1}^d \frac{\delta_i^2}{\sigma_i^2}}  \leq 
    \sqrt{ \sum_{i=1}^d\frac{ \delta_i^2}{\sigma_m^2}} = \frac{||\delta||_2}{\sigma_m} 
\end{equation}

Therefore, if $\frac{||\delta||_2}{\sigma_m}  \leq \frac{1}{2}(\Phi^{-1}(\underline{p_A})-\Phi^{-1}(\overline{p_B}))$, it can ensure Eq. (\ref{eq:29}). 

To conclude, $g'(x+\delta)=c_A$ is ensured if:

\begin{equation}
    ||\delta||_2  \leq \frac{1}{2}\min \{\sigma_i\}(\Phi^{-1}(\underline{p_A})-\Phi^{-1}(\overline{p_B}))
\end{equation}

Figure \ref{fig:deriving} also illustrates the relationship between the certified radius and Eq. (\ref{eq:29}). 

\begin{figure}[!h]
    \centering
    \includegraphics[width=80mm]{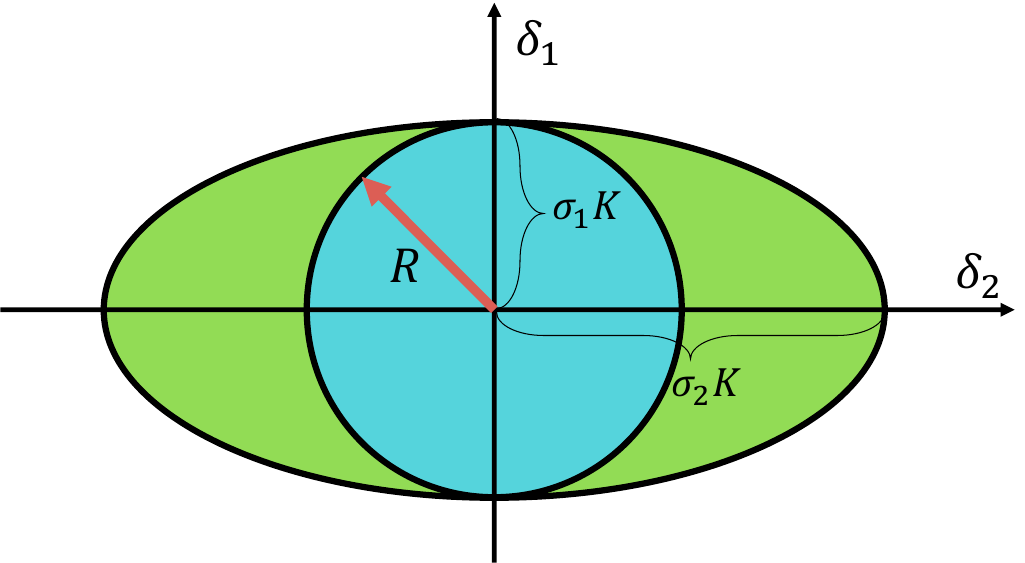}
    \caption{\textmd{\textbf{Illustration}. Considering a $\delta$ space with two dimensions, Eq. (\ref{eq:29}) construct an ellipse with semi-minor axes $\sigma_1 K$ and semi-major axes $\sigma_2 K$, where  $K=\frac{1}{2}(\Phi^{-1}(\underline{p_A})-\Phi^{-1}(\overline{p_B}))$ and $\sigma_1 < \sigma_2$. Within the ellipse, the smoothed classifier's prediction is guaranteed to be $c_A$. To find the certified radius $R$ in $\ell_2$ norm, it is equivalent to find the circle of radius such that the radius is the semi-minor axes of the ellipse since within such circle, the smoothed prediction is constantly to be $c_A$. Therefore, in high-dimensional case, our certified radius is $\min\{\sigma_i\} K$. }
}
    \label{fig:deriving}
\end{figure}

\end{proof}

\section{Proofs for Theorem \ref{thm:our thm binary}}
\label{apd: proof thm 4.2}
Follow the proof in Appendix \ref{apd: proof thm 4.1}, we can prove the binary case of Theorem \ref{thm:our thm}. 

\thmtwo*


\begin{proof}

In the binary case, if $\mathbb{P}(f(x+\delta+\epsilon)=c_A) > 1/2$, we can ensure that $g'(x+\delta)=c_A$, which is also equivalent to prove:

\begin{equation}
    \mathbb{P}(f(Y)=c_A) > \frac{1}{2}
\end{equation}

Define the half-space as in Eq. (\ref{eq:21}), we also have $\mathbb{P}(X\in A)=\underline{p_A}$ by Eq. (\ref{eq:22}). Using Neyman-Pearson Lemma, we have:

\begin{equation}
    \mathbb{P}(f(Y)=c_A) \ge \mathbb{P}(Y\in A)
\end{equation}

To guarantee that $\mathbb{P}(f(Y)=c_A) > \frac{1}{2}$, we need $\mathbb{P}(Y\in A) > \frac{1}{2}$. 

By Eq. (\ref{eq:27}), we have:

\begin{equation}
    \mathbb{P}(Y\in A)=\Phi(\Phi^{-1}(\underline{p_A})- \sqrt{ \sum_{i=1}^d \frac{\delta_i^2}{\sigma_i^2}})
\end{equation}

Therefore, to ensure $\mathbb{P}(f(Y)=c_A) > \frac{1}{2}$, we need:

\begin{equation}
    \Phi(\Phi^{-1}(\underline{p_A})- \sqrt{ \sum_{i=1}^d \frac{\delta_i^2}{\sigma_i^2}}) > \frac{1}{2} \quad \Longleftrightarrow \quad \sqrt{ \sum_{i=1}^d \frac{\delta_i^2}{\sigma_i^2}} < \Phi^{-1}(\underline{p_A})
\end{equation}

Similarly, it is obvious to have:

\begin{equation}
        ||\delta||_2 < \min \{\sigma_i\} \Phi^{-1}(\underline{p_A})
\end{equation}
This completes the proof.
\end{proof}

\section{Algorithms}
\label{sec:algorithm}

\begin{algorithm}[!h]
   \caption{Anisotropic Randomized Smoothing Prediction}
   \label{alg:certify}
\begin{algorithmic}
   \STATE {\bfseries Given:} Base Classifier $f$, Noise Generator $g_n$, Input image $x$, Monte Carlo Sampling Number $n$, confidence $1-\alpha$
   \STATE $\mathbf{\mu}$, $\mathbf{\Sigma} \leftarrow g_n(x)$
   \STATE $counts \leftarrow ClassifySamples(f,x,\mu,\mathbf{\Sigma},n)$
   \STATE $\hat{c}_A, \hat{c}_B \leftarrow \text{\textbf{top two indexes in}} \  counts$
   \STATE $n_A, n_B \leftarrow counts[\hat{c}_A], counts[\hat{c}_B]$
    \IF{$BinomPValue(n_A,n_A+n_B,0.5) \le \alpha$}
        \STATE \textbf{return} prediction $\hat{c}_A$
    \ELSE
        \STATE \textbf{return} ABSTAIN
    \ENDIF
   
\end{algorithmic}
\end{algorithm}

\begin{algorithm}[!h]
   \caption{Anisotropic Randomized Smoothing Certification}
   \label{alg:predict}
\begin{algorithmic}
   \STATE {\bfseries Given:} Base Classifier $f$, Noise Generator $g_n$, Input image $x$, Monte Carlo Sampling Number $n_0$ and $n$, confidence $1-\alpha$
   \STATE $\mathbf{\mu}$, $\mathbf{\Sigma} \leftarrow g_n(x)$
   \STATE $counts\_select \leftarrow ClassifySamples(f,x,\mu,\mathbf{\Sigma},n_0)$
   \STATE $\hat{c}_A \leftarrow \text{\textbf{top index in}} \  counts\_select$
   \STATE $counts \leftarrow ClassifySamples(f,x,\mu,\mathbf{\Sigma},n)$
   \STATE $\underline{p_A} \leftarrow LowerConfBound(counts[\hat{c}_A],n,1-\alpha)$
    \IF{$\underline{p_A}>\frac{1}{2}$}
        \STATE \textbf{return} prediction $\hat{c}_A$ and radius $\min \{\sigma_i\} \Phi^{-1}(\underline{p_A})$
    \ELSE
        \STATE \textbf{return} ABSTAIN
    \ENDIF
   
\end{algorithmic}
\end{algorithm}

\section{Proof for Theorem \ref{thm:our Laplace} }
\label{apd: proof thm 7.1}

Similar to the proof for Theorem \ref{thm:our thm}, we first prove the special case of Lemma \ref{lemma:neyman-pearson} when the random variables follows independent anisotropic Laplace distribution.

\begin{lemma}[]
\label{lemma: anisotropic NP lemma for Laplace}

Let $X \sim \mathcal{L}(x+\mu,\mathbf{\Lambda})$ and $Y \sim \mathcal{L}(x+\mu+\delta, \mathbf{\Lambda})$, where $\delta=[\delta_1, \delta_2, ..., \delta_d]$, $\mu=[\mu_1, \mu_2, ..., \mu_d]$ and $\mathbf{\Lambda}=diag(\lambda_1, \lambda_2, ..., \lambda_d)$. Let $h: \mathbb{R}^d \to {0,1}$ be any deterministic or random function. Then:
\begin{itemize}
    \item[(1)] If $S= \{ z\in \mathbb{R}^d: \sum_{i=1}^d\frac{1}{\lambda_i}(|z_i-\delta_i|-|z_i|) \geq \beta \}$ for some $\beta$ and $\mathbb{P}(h(X)=1) \ge \mathbb{P}(X\in S)$, then $\mathbb{P}(h(Y)=1) \ge \mathbb{P}(Y\in S)$

    \item[(2)] If $S= \{ z\in \mathbb{R}^d: \sum_{i=1}^d\frac{1}{\lambda_i}(|z_i-\delta_i|-|z_i|) \leq \beta \}$ for some $\beta$ and $\mathbb{P}(h(X)=1) \le \mathbb{P}(X\in S)$, then $\mathbb{P}(h(Y)=1) \le \mathbb{P}(Y\in S)$
\end{itemize}
\end{lemma}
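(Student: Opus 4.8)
The plan is to mirror the proof of Lemma~\ref{lemma: anisotropic NP lemma} verbatim in structure: reduce the anisotropic Laplace case to the general Neyman--Pearson Lemma~\ref{lemma:neyman-pearson} by computing the likelihood ratio of the two densities and showing that its super/sub-level sets coincide exactly with the sets $S$ in the statement. First I would write down the two product densities. Since both $X$ and $Y$ have the same diagonal scale matrix $\mathbf{\Lambda}=\mathrm{diag}(\lambda_1,\dots,\lambda_d)$ and independent coordinates, their PDFs are
\begin{align}
f_X(z) &= k\exp\!\Big(-\sum_{i=1}^d \tfrac{1}{\lambda_i}\,|z_i-(x_i+\mu_i)|\Big),\\
f_Y(z) &= k\exp\!\Big(-\sum_{i=1}^d \tfrac{1}{\lambda_i}\,|z_i-(x_i+\mu_i+\delta_i)|\Big),
\end{align}
with a common normalizing constant $k=\prod_{i=1}^d \frac{1}{2\lambda_i}$ that cancels in the ratio because $X$ and $Y$ differ only in their means.

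The second step is to form the ratio $f_Y(z)/f_X(z)=\exp\!\big(\sum_{i=1}^d \tfrac{1}{\lambda_i}(|z_i-(x_i+\mu_i)|-|z_i-(x_i+\mu_i+\delta_i)|)\big)$. Centering the coordinates by $x_i+\mu_i$ exactly as the statement implicitly does, the exponent becomes $\sum_{i=1}^d \tfrac{1}{\lambda_i}(|z_i|-|z_i-\delta_i|) = -\sum_{i=1}^d \tfrac{1}{\lambda_i}(|z_i-\delta_i|-|z_i|)$. Because $\exp$ is strictly increasing, the threshold set $\{z: f_Y(z)/f_X(z)\le t\}$ is precisely $\{z:\sum_{i=1}^d \tfrac{1}{\lambda_i}(|z_i-\delta_i|-|z_i|)\ge -\log t\}$, while $\{z: f_Y(z)/f_X(z)\ge t\}$ equals $\{z:\sum_{i=1}^d \tfrac{1}{\lambda_i}(|z_i-\delta_i|-|z_i|)\le -\log t\}$. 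Setting $\beta=-\log t$ (so that $t=e^{-\beta}>0$ sweeps all positive reals as $\beta$ ranges over $\mathbb{R}$) identifies the set $S$ of part (1) with $\{f_Y/f_X\le t\}$ and the set $S$ of part (2) with $\{f_Y/f_X\ge t\}$.

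Finally I would invoke Lemma~\ref{lemma:neyman-pearson}. For part (1), the identification $S=\{f_Y/f_X\le t\}$ together with the hypothesis $\mathbb{P}(h(X)=1)\ge\mathbb{P}(X\in S)$ yields $\mathbb{P}(h(Y)=1)\ge\mathbb{P}(Y\in S)$ by clause~(1) of the lemma; part (2) follows symmetrically from clause~(2). The main obstacle here is purely bookkeeping rather than analytic: unlike the Gaussian case, the log-likelihood ratio is piecewise linear (hence nondifferentiable) rather than linear in $z$, so the decision boundary admits no clean closed form. However, this is irrelevant to the set-equality argument, which uses only the strict monotonicity of $\exp$. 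The one point demanding care is the sign reversal between $|z_i|-|z_i-\delta_i|$ (which appears in the ratio) and $|z_i-\delta_i|-|z_i|$ (which appears in $S$): this flip is exactly why the ``$\ge\beta$'' set in part (1) must be paired with the lower-tail clause ``$\le t$'' of Neyman--Pearson, and matching the inequality directions correctly to the two clauses is the only place where an error could creep in.
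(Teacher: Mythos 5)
Your proof is correct and takes essentially the same route as the paper's: write the two Laplace densities, compute the likelihood ratio in coordinates centered at $x+\mu$, identify the sets $S$ with the sub/super-level sets $\{f_Y/f_X \le t\}$ and $\{f_Y/f_X \ge t\}$ via $\beta=-\log t$, and invoke the corresponding clauses of Lemma~\ref{lemma:neyman-pearson}. Your bookkeeping is in fact slightly cleaner than the paper's, whose displayed ratio $f_Y/f_X$ has numerator and denominator swapped and then a compensating sign slip in the exponent, landing on the same correct identification that you derive directly.
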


\begin{proof}
Let $X \sim \mathcal{L}(x+\mu,\mathbf{\Lambda})$ and $Y \sim \mathcal{L}(x+\mu+\delta, \mathbf{\Lambda})$. We have the probability density functions $f_X$ and $f_Y$ as:

\begin{equation}
    f_X(z)=k \exp \ \left( -\sum_{i=1}^d \frac{1}{\lambda_i}|z_i-(x_i+\mu_i)| \right) \nonumber
\end{equation}

\begin{equation}
    f_Y(z)=k \exp \ \left(-\sum_{i=1}^d \frac{1}{\lambda_i}|z_i-(x_i+\mu_i+\delta_i)| \right) \nonumber
\end{equation}

where $k$ is a constant. The ratio of the PDF is:

\begin{equation}
\begin{aligned}
    \frac{f_Y(z)}{f_X(z)} &=\frac{\exp \ \left( -\sum_{i=1}^d \frac{1}{\lambda_i}|z_i-(x_i+\mu_i)| \right)}{\exp \ \left(-\sum_{i=1}^d \frac{1}{\lambda_i}|z_i-(x_i+\mu_i+\delta_i)| \right)}\\
    & =\frac{\exp \ \left( -\sum_{i=1}^d \frac{1}{\lambda_i}|z_i| \right)}{\exp \ \left(-\sum_{i=1}^d \frac{1}{\lambda_i}|z_i-\delta_i| \right)}\\
    & = \exp \left( -\sum_{i=1}^d\frac{1}{\lambda_i}(|z_i-\delta_i|-|z_i|)      \right)
\end{aligned}
\end{equation}

Let $\beta=-\log t$, we have:

\begin{equation}
\begin{aligned}
    \sum_{i=1}^d\frac{1}{\lambda_i}(|z_i-\delta_i|-|z_i|) \geq \beta \Longleftrightarrow \frac{f_Y(z)}{f_X(z)} \leq t \\
    \sum_{i=1}^d\frac{1}{\lambda_i}(|z_i-\delta_i|-|z_i|) \leq \beta \Longleftrightarrow \frac{f_Y(z)}{f_X(z)} \geq t \\
\end{aligned}
\end{equation}

This completes the proof.
\end{proof}

\thmthree*



\begin{proof}

Denote $T(x) = \sum_{i=1}^d \frac{(|x_i-\delta_i|-|x_i|)}{\lambda_i}$. Use Triangle Inequality for each term in the summation, we can derive a bound for $T(x)$:

\begin{equation}
\label{eq:46}
    -\sum_{i=1}^d \frac{|\delta_i|}{\lambda_i} \leq T(x) \leq \sum_{i=1}^d \frac{|\delta_i|}{\lambda_i}
\end{equation}

Define two sets:

\begin{equation}
\label{eq: two sets for Laplace}
\begin{aligned}
A:= \{ z: T(z) \geq \beta_1\}\\
B:= \{ z: T(z) \leq \beta_2\}
\end{aligned}
\end{equation}
where the $\beta_1$ and $\beta_2$ are selected to suffice:

\begin{equation}
\begin{aligned}
\mathbb{P}(X \in A) = \underline{p_A}\\
\mathbb{P}(X \in B) = \overline{p_B}
\end{aligned}
\end{equation}

Applying Lemma \ref{lemma: anisotropic NP lemma for Laplace} to Eq. (\ref{eq: two sets for Laplace}), we have:

\begin{equation}
\begin{aligned}
\mathbb{P}(f(Y)=c_A) \ge \mathbb{P}(Y\in A)\\
\mathbb{P}(f(Y)=c_B) \le \mathbb{P}(Y\in B)\\
\end{aligned}
\end{equation}

To ensure $\mathbb{P}(f(Y)=c_A) \ge \mathbb{P}(f(Y)=c_B)$, we need:

\begin{equation}
\label{eq:50}
    \mathbb{P}(Y\in A) \ge \mathbb{P}(Y\in B)
\end{equation}

For $\mathbb{P}(Y \in A)$, we have:

\begin{equation}
\label{eq:51}
\begin{aligned}
\mathbb{P}(Y \in A) &= \iint ... \int_A k^d \exp(-\sum_{i=1}^d \frac{|x_i-\delta_i|}{\lambda_i}) dx_1 dx_2 ... dx_d \\
&=\iint ... \int_A k^d \exp(-\sum_{i=1}^d \frac{|x_i|}{\lambda_i}) \exp (-T(x)) dx_1 dx_2 ... dx_d \\
&\ge \iint ... \int_A k^d \exp(-\sum_{i=1}^d \frac{|x_i|}{\lambda_i}) \exp(-\sum_{i=1}^d \frac{|\delta_i|}{\lambda_i}) dx_1 dx_2 ... dx_d \\
&= \exp(-\sum_{i=1}^d \frac{|\delta_i|}{\lambda_i}) \ \underline{p_A}
\end{aligned}
\end{equation}
The inequality in the middle is derived by Eq. (\ref{eq:46}). Simlilarly, for $\mathbb{P}(Y \in B)$, we have:

\begin{equation}
\label{eq:52}
\begin{aligned}
\mathbb{P}(Y \in B) &= \iint ... \int_b k^d \exp(-\sum_{i=1}^d \frac{|x_i-\delta_i|}{\lambda_i}) dx_1 dx_2 ... dx_d \\
&=\iint ... \int_B k^d \exp(-\sum_{i=1}^d \frac{|x_i|}{\lambda_i}) \exp (-T(x)) dx_1 dx_2 ... dx_d \\
&\le \iint ... \int_B k^d \exp(-\sum_{i=1}^d \frac{|x_i|}{\lambda_i}) \exp(\sum_{i=1}^d \frac{|\delta_i|}{\lambda_i}) dx_1 dx_2 ... dx_d \\
&= \exp(\sum_{i=1}^d \frac{|\delta_i|}{\lambda_i}) \ \overline{p_B}
\end{aligned}
\end{equation}

Therefore, the robustness is guaranteed if $\exp(-\sum_{i=1}^d \frac{|\delta_i|}{\lambda_i}) \ \underline{p_A} \ge \exp(\sum_{i=1}^d \frac{|\delta_i|}{\lambda_i}) \ \overline{p_B}$, which is equivalent to:

\begin{equation}
\label{eq:53}
    \sum_{i=1}^d \frac{|\delta_i|}{\lambda_i} \le \frac{1}{2} \log ( \underline{p_A}/\overline{p_B})
\end{equation}

Since

\begin{equation}
\label{eq:54}
    \sum_{i=1}^d \frac{|\delta_i|}{\lambda_i} \le \sum_{i=1}^d \frac{|\delta_i|}{\min \{\lambda_i\}}
\end{equation}

if $||\delta||_1 \le \frac{1}{2} \min \{ \lambda_i\} \log ( \underline{p_A}/\overline{p_B})$, the inequality (\ref{eq:53}) is also sufficed.

Also, if we apply the complement set of A to Eq. (\ref{eq:51}), we have:

\begin{equation}
    \mathbb{P}(f(Y)=c_A) \ge 1- \exp (\sum_{i=1}^d \frac{|\delta_i|}{\lambda_i})(1-\underline{p_A})
\end{equation}

By Eq. (\ref{eq:52}) and Eq. (\ref{eq:50}), we have:

\begin{equation}
    \sum_{i=1}^d \frac{|\delta_i|}{\lambda_i} \le -\log(1-\underline{p_A}+\overline{p_B})
\end{equation}

Similarly, by Eq. (\ref{eq:54}), to guarantee the robustness, we need:

\begin{equation}
    ||\delta||_1 \le - \min \{\lambda_i\} \log(1-\underline{p_A}+\overline{p_B})
\end{equation}

In conclusion, to guarantee the robustness, we need:

\begin{equation}
    ||\delta||_1 \le \max  \{\frac{1}{2} \min \{\lambda_i\} \log (\underline{p_A}/\overline{p_B}), -\min \{\lambda_i\} \log(1-\underline{p_A}+\overline{p_B})\}
\end{equation}

This completes the proof.
\end{proof}





\end{document}